\documentclass[runningheads]{llncs}

\PassOptionsToPackage{table}{xcolor}

\usepackage{eccv}

\usepackage{eccvabbrv}

\usepackage{graphicx}
\usepackage{booktabs}

\usepackage[accsupp]{axessibility}  

\usepackage{algorithm}
\usepackage{algorithmic}
\usepackage{amsmath}
\usepackage{amssymb}
\usepackage{tcolorbox}
\usepackage{wrapfig}
\usepackage{float}
\usepackage{siunitx}
\usepackage{multirow}

\usepackage{xcolor}
\usepackage{colortbl}
\usepackage{soul}
\usepackage{needspace}
\usepackage{subcaption}
\usepackage{arydshln}
\usepackage{pifont}
\usepackage{enumitem}
\usepackage{xparse}

\definecolor{eccvblue}{rgb}{0.12,0.49,0.85}
\definecolor{blue}{HTML}{ecf9fd}
\definecolor{gray}{HTML}{f4f4f4}
\definecolor{yellow}{HTML}{fefbe1}
\definecolor{green}{HTML}{e1fee4}
\definecolor{boxblue}{RGB}{26,52,190}
\definecolor{boxbg}{RGB}{245,247,255}

\newcommand{\bluebox}[2]{%
    \par\medskip
    \Needspace{\baselineskip}%
    \noindent
    \fcolorbox{boxblue}{boxbg}{%
        \parbox{\dimexpr\linewidth-2\fboxsep-2\fboxrule\relax}{%
            \colorbox{boxblue}{%
                \parbox{\dimexpr\linewidth-2\fboxsep\relax}{%
                    \color{white}\textbf{#1}%
                }%
            }%

            #2
        }%
    }%
    \par\medskip
}

\usepackage{hyperref}

\usepackage{orcidlink}

\makeatletter

\renewcommand*\l@section[2]{%
  \@dottedtocline{1}{0em}{2.3em}{\bfseries #1}{\bfseries #2}
}

\renewcommand*\l@subsection[2]{%
  \@dottedtocline{2}{2.0em}{3.2em}{#1}{#2}%
}

\newcommand{\supplementarytoc}{%
  \begingroup
  \hypersetup{linkcolor=black}%
  \setcounter{tocdepth}{2}%
  \begin{center}
    \begin{minipage}{0.95\textwidth}
      \small
      \@starttoc{atoc}%
    \end{minipage}
  \end{center}
  \endgroup
}

\let\origsection\section
\let\origsubsection\subsection

\newif\ifinappendixtoc
\inappendixtocfalse

\RenewDocumentCommand{\section}{s o m}{%
  \IfBooleanTF{#1}{%
    \IfNoValueTF{#2}
      {\origsection*{#3}}
      {\origsection*[#2]{#3}}%
  }{%
    \IfNoValueTF{#2}
      {\origsection{#3}}
      {\origsection[#2]{#3}}%
    \ifinappendixtoc
      \addcontentsline{atoc}{section}{\protect\numberline{\thesection}\IfNoValueTF{#2}{#3}{#2}}%
    \fi
  }%
}

\RenewDocumentCommand{\subsection}{s o m}{%
  \IfBooleanTF{#1}{%
    \IfNoValueTF{#2}
      {\origsubsection*{#3}}
      {\origsubsection*[#2]{#3}}%
  }{%
    \IfNoValueTF{#2}
      {\origsubsection{#3}}
      {\origsubsection[#2]{#3}}%
    \ifinappendixtoc
      \addcontentsline{atoc}{subsection}{\protect\numberline{\thesubsection}\IfNoValueTF{#2}{#3}{#2}}%
    \fi
  }%
}

\makeatother

\begin{document}

\title{TanGO: Training-Free 3D Editing via Tangent-space Guidance and Optimization}

\titlerunning{TanGO}

\author{Siwoo Lim\inst{1}\orcidlink{0009-0001-1187-6548} \and
Sunjae Yoon\inst{2}\orcidlink{0000-0001-7458-5273} \and
Gwanhyeong Koo\inst{1}\orcidlink{0009-0005-6455-3223} \and
Hyeonseo Yun\inst{1}\orcidlink{0009-0001-2486-1142} \and
Chang D. Yoo\inst{1}\thanks{Corresponding Author}\orcidlink{0000-0002-0756-7179}}

\authorrunning{S. Lim et al.}

\institute{Korea Advanced Institute of Science and Technology, Daejeon, Republic of Korea \\
\email{\{siu4450, kookie, hyunseo, cd\_yoo\}@kaist.ac.kr}
\and
Chung-Ang University, Seoul, Republic of Korea\\
\email{\{sunjaeyoon\}@cau.ac.kr}}

\maketitle

\begin{figure}[h]
  \vspace{-1cm}
  \centering
  \includegraphics[width=0.9\textwidth]{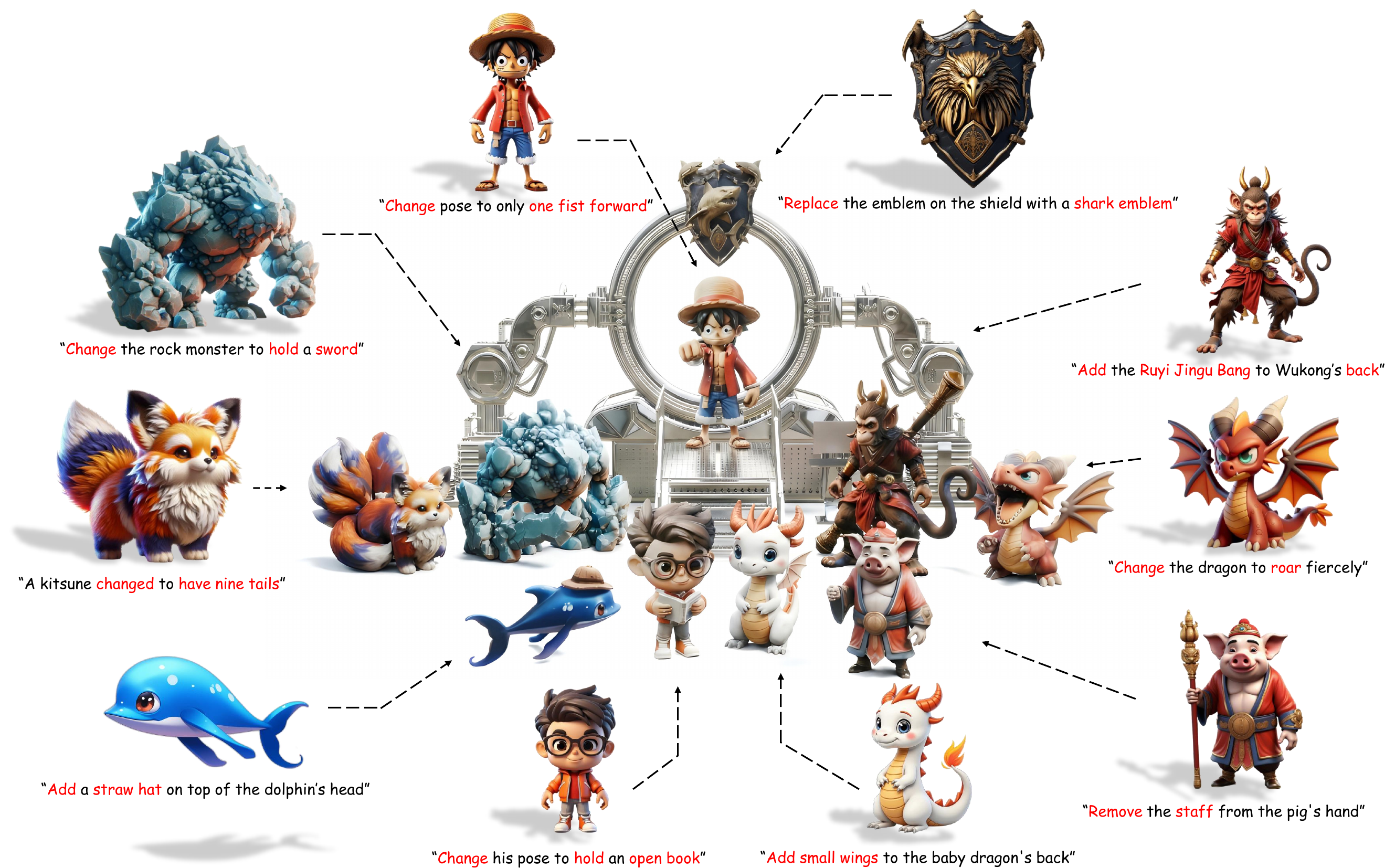}
  \caption{\textbf{Overview of 3D Editing Results.} TanGO achieves precise localized edits across diverse categories, preserving unedited geometry and source identity.}
  \label{fig:fig1}
  \vspace{-1cm}
\end{figure}

\begin{abstract}

While recent flow-matching 3D generative models (e.g., VecSet) adopt structured representations, their tokens share global context, causing conventional training-free editing to suffer from semantic artifacts such as collapsed preserved regions or incomplete transformations. To address this, we propose \textbf{TanGO}, a training-free framework that enables adaptive per-token steering in the tangent space of generative dynamics. To realize this selective control, we formulate a one-step optimal control rule and determine the strength of each token’s control signal using a von Mises-Fisher inspired directional discrepancy derived from the source and target velocity fields. Experiments show that TanGO substantially reduces structural artifacts and achieves state-of-the-art performance, outperforming existing 3D editing baselines. The code is publicly available at \url{https://github.com/siw00-lim/TanGO}.

\keywords{3D Editing \and Training-free \and Optimal Control}

\end{abstract}

\begin{figure*}[t!]
  \begin{center}
    \centerline{\includegraphics[width=0.77\textwidth]{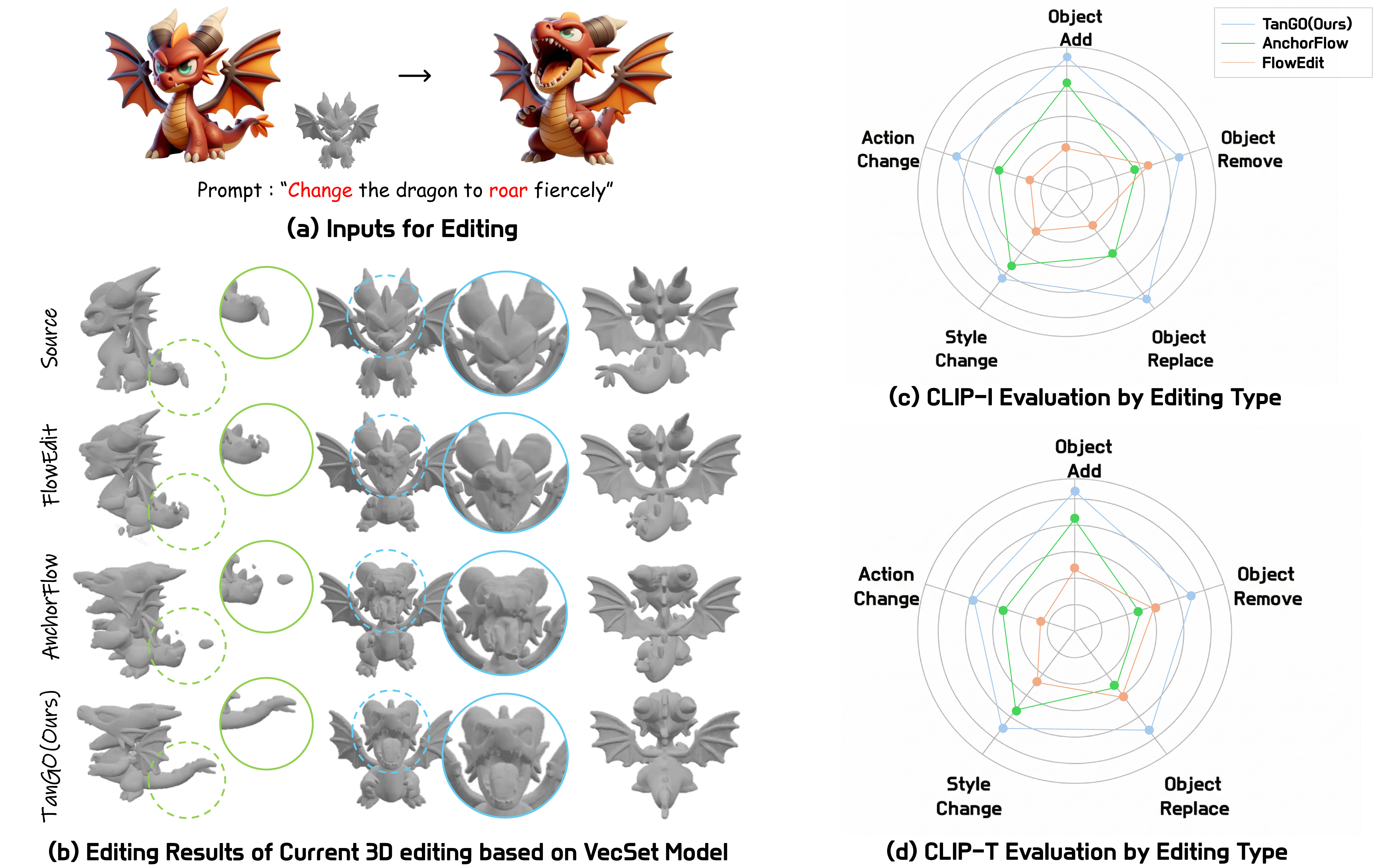}}\
    \caption{(a) Input for text-driven 3D editing. (b) Prior methods often introduce semantic artifacts, such as incomplete edits in target regions or structural degradation in preserved areas. In contrast, TanGO preserves unedited regions while achieving precise, localized edits. (c-d) CLIP-I and CLIP-T scores across five editing categories show that TanGO consistently improves the balance between localized preservation and semantic modification over existing baselines.}
    \label{fig:fig2}
  \end{center}
\end{figure*}

\section{Introduction}
Recently, 3D editing technology has become an essential tool in digital asset creation, with its impact stretching from gaming and entertainment to architecture and industrial design \cite{chen2024generic, bar2025editp23, haque2023instruct}. As digital twins and virtual environments become more sophisticated, there is a growing demand for tools that can precisely modify specific parts of a 3D model without having to rebuild the entire asset from scratch. Alongside this trend, the success of 3D generative models \cite{poole2022dreamfusion, lin2023magic3d, chen2023fantasia3d, hong2023lrm, wu2024direct3d, liang2024luciddreamer} has greatly accelerated the field by providing a strong foundation for high-quality editing. This has allowed the technology to move beyond simple shape changes toward smart, semantic-aware modifications that understand the underlying structure of 3D content \cite{michel2022text2mesh,chen2024gaussianeditor, gao2023textdeformer}.

Even with these technical advancements, 3D editing still faces practical challenges. Most existing methods rely on low-level representations like voxels and fail to utilize the structured space of advanced high-level representations such as VecSet\cite{zhang20233dshape2vecset, hunyuan3d2025hunyuan3d, li2025triposg, zhang2024clay, chen2025dora, lai2025lattice}. The main issue is that applying traditional editing methods to these high-level formats often ignores their natural semantic boundaries. As shown in Fig.~\ref{fig:fig2}(b), results edited by existing methods such as the 2D editing framework FlowEdit\cite{kulikov2025flowedit} or the 3D-based editing method AnchorFlow\cite{zhou2026anchorflow} suffer from significant semantic artifacts, including mesh tearing or surface collapsing in edited structures. Furthermore, these methods often fail to preserve the original source identity, even in structures that should be preserved.

\begin{figure*}[t!]
  \begin{center}
    \centerline{\includegraphics[width=\textwidth]{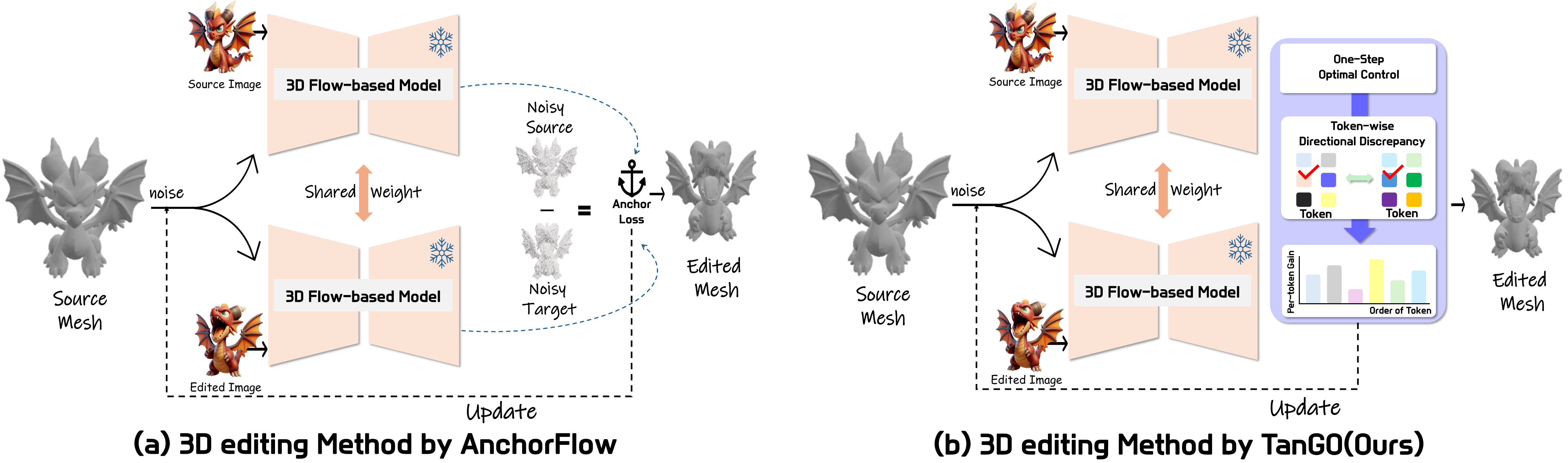}}\
    \caption{\textbf{Comparison of 3D editing frameworks.} (a) AnchorFlow optimizes an anchor-alignment loss to update the edited mesh. (b) TanGO (Ours) formulates the editing process as a one-step optimal control problem to modulate update signals at the token level. Our formulation selectively amplifies signals for tokens in regions designated for editing while suppressing those in regions to be preserved. To achieve this, we compute the token-wise directional discrepancy between the source and target velocities, and use it to derive an adaptive gain that scales the update intensity according to local editing requirements.}
    \label{fig:fig3}
  \end{center}
\end{figure*}

To demonstrate that these issues are not limited to specific samples, we conducted a quantitative evaluation using various types of editing datasets as illustrated in Fig.~\ref{fig:fig2}(c,d). We introduced CLIP-I\cite{radford2021learning} and CLIP-T\cite{radford2021learning} scores to measure semantic alignment. The results show that existing methods fail to achieve consistently high performance across all data types, and the scores themselves remain significantly low. This quantitative result confirms the persistent mismatch between the text prompts and the final 3D outputs in prior methods.

For a deeper understanding of why these artifacts occur when applying prior editing methods to high-level representations, we analyze the Hunyuan3D 2.1 model \cite{hunyuan3d2025hunyuan3d}. Our findings reveal that while tokens interact globally, each token's geometric influence is highly localized. This indicates that the imprecision of prior methods stems from the absence of explicit per-token actuation. Therefore, successful editing requires selective control. Instead of relying on a uniform global anchor loss, we propose \textbf{TanGO}, a training-free framework for localized 3D editing that allocates adaptive per-token steering in the tangent space (Fig.~\ref{fig:fig3}).

Specifically, TanGO formulates editing as an instantaneous one-step optimal control problem, yielding a simple closed-form update that balances editing progress and control energy. To achieve this, we define a vMF-grounded token-wise directional discrepancy between the normalized source and target velocities, which determines each token’s steering strength. This enables adaptive per-token steering via modulated gains and mean-gain normalization.

In summary, TanGO enables mask-free localized 3D editing by explicitly allocating control at the token level in tangent space. Additionally, to address the lack of structural diversity in existing benchmarks, we introduce the TanGOEdit dataset, which consists of both rigid and non-rigid editing tasks.

\section{Related Works}
\subsection{3D mesh generation}
Modern 3D generation has rapidly progressed from SDS-based lifting of 2D priors~\cite{poole2022dreamfusion, lin2023magic3d, chen2023fantasia3d, wang2023prolificdreamer} to native 3D foundation models based on diffusion and flow matching. Representative examples include TRELLIS~\cite{xiang2025structured}, which introduces structured 3D latents (SLAT) for scalable and versatile generation, as well as recent large-scale systems such as Hunyuan3D~\cite{hunyuan3d2025hunyuan3d}, TripoSG~\cite{li2025triposg}, and VecSet-based generators~\cite{zhang20233dshape2vecset} that improve quality, efficiency, and condition alignment. These models provide the representational backbone for downstream 3D editing.

\subsection{3D mesh editing}
3D editing is more challenging than 2D editing~\cite{brooks2023instructpix2pix, cao2023masactrl, koo2024flexiedit, yoon2024dni} because it must preserve geometric coherence and consistency over the whole shape. Instant3dit~\cite{barda2025instant3dit} formulates 3D editing as multiview image inpainting followed by reconstruction, while more recent methods directly edit native 3D latent spaces. VoxHammer~\cite{li2025voxhammer} improves local editing by reusing inverted latents and cached features for preserved regions, Nano3D~\cite{Ye2025Nano3D} extends FlowEdit~\cite{kulikov2025flowedit} into TRELLIS~\cite{xiang2025structured} and introduces connectivity-aware region merging, and AnchorFlow~\cite{zhou2026anchorflow} stabilizes inversion-free editing through latent-anchor consistency under timestep-dependent noise. Compared with these methods, our approach focuses on explicit per-token actuation in the tangent space, directly exploiting the spatial locality of VecSet tokens for localized editing.

\section{Preliminaries}

\subsection{VecSet-based 3D Generative Model}
We consider a pretrained flow-based 3D generative model defined over a VecSet latent representation. Given a condition $c$ (e.g., a text prompt or a reference image), the model operates on a latent state
\begin{equation}
X_t = \{x_t^1, x_t^2, \dots, x_t^N\}, \qquad x_t^i \in \mathbb{R}^d,
\end{equation}
where each $x_t^i$ is a latent token and $N$ is the number of tokens. In our setting, the pretrained Hunyuan3D 2.1 model uses $N=4096$ tokens. The model predicts a conditional velocity field $v_\theta(X_t, c, t)$, which defines the generative dynamics in latent space. Starting from a noisy latent state, the model follows this velocity field to produce a clean latent representation, which is then decoded into a 3D mesh. Although VecSet represents a 3D object as a set of latent tokens, it has often been regarded as a relatively global representation because the tokens interact through self-attention and share global contextual information. In this view, perturbing one token may influence others through global token interactions, making VecSet-based models appear less suitable for localized editing than voxel-based representations.

\subsection{Training-free Editing in Flow-based Models}
In training-free editing, we are given a source 3D object and a target condition describing the desired edit, and the goal is to obtain an edited 3D mesh without finetuning the pretrained generator. Let $c_{src}$ and $c_{tar}$ denote the source and target conditions. At the current editing state $X_t^{FE}$, the model predicts two conditional velocity fields,
\begin{equation}
v_{src} = v_\theta(X_t^{FE}, c_{src}, t), \qquad
v_{tar} = v_\theta(X_t^{FE}, c_{tar}, t).
\end{equation}
Most existing methods derive an editing signal from their difference, $\Delta v = v_{tar} - v_{src}$, and use it to update the editing trajectory. Although $\Delta v$ is already defined token-wise because both $v_{src}$ and $v_{tar}$ inherit the tokenized VecSet structure, prior methods do not explicitly actuate it at the token level. Instead, they typically apply a globally shared scaling factor to the full velocity difference, so preserved and editable regions are often perturbed together.

\begin{figure*}[t]
  \begin{center}
    \centerline{\includegraphics[width=\textwidth]{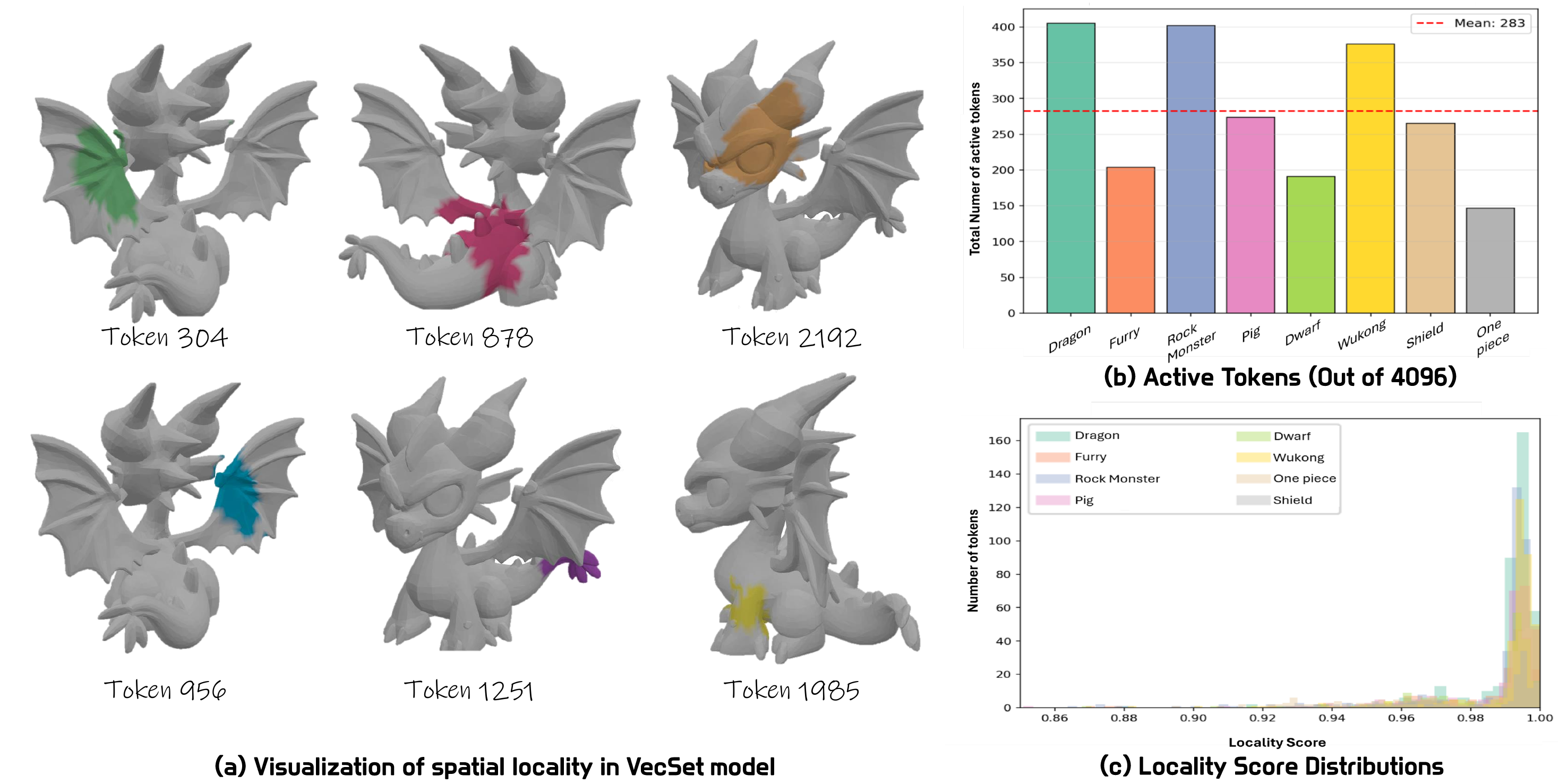}}
    \caption{(a) visualizes the localized influence of individual tokens in the VecSet model, where the colored areas indicate specific regions activated by each respective token. (b) illustrates the number of active tokens out of a total of 4096 triggered during the inference of each data category shown on the x-axis. (c) presents the locality score distribution for datasets across various categories, quantifying how locally the VecSet model responds. The y-axis represents the number of activated tokens corresponding to each locality score.}
    \label{fig:fig4}
  \end{center}
\end{figure*}

\subsection{Observation: Token-wise Spatial Locality in VecSet}
\label{sec:3.3}
Despite reports that transferring trajectory-based editing to VecSet-based 3D generators can be unstable potentially due to their more global token interactions~\cite{Ye2025Nano3D}, our empirical analysis shows that tokens in the VecSet-based Hunyuan3D 2.1 model often exhibit strong spatial locality in practice\footnote{We also demonstrate in the Appendix that this strong token locality holds true across other generative models trained with the VecSet representation.}. As illustrated in Fig.~\ref{fig:fig4}(a), visualizing individual tokens reveals that their geometric influence is typically concentrated on a specific part (e.g., wing, tail, arm, or facial region) rather than being distributed across the entire object. This suggests that each token tends to predominantly affect a limited geometric component, while other regions are shaped by different tokens.

\begin{figure*}[t!]
  \begin{center}
    \centerline{\includegraphics[width=\textwidth]{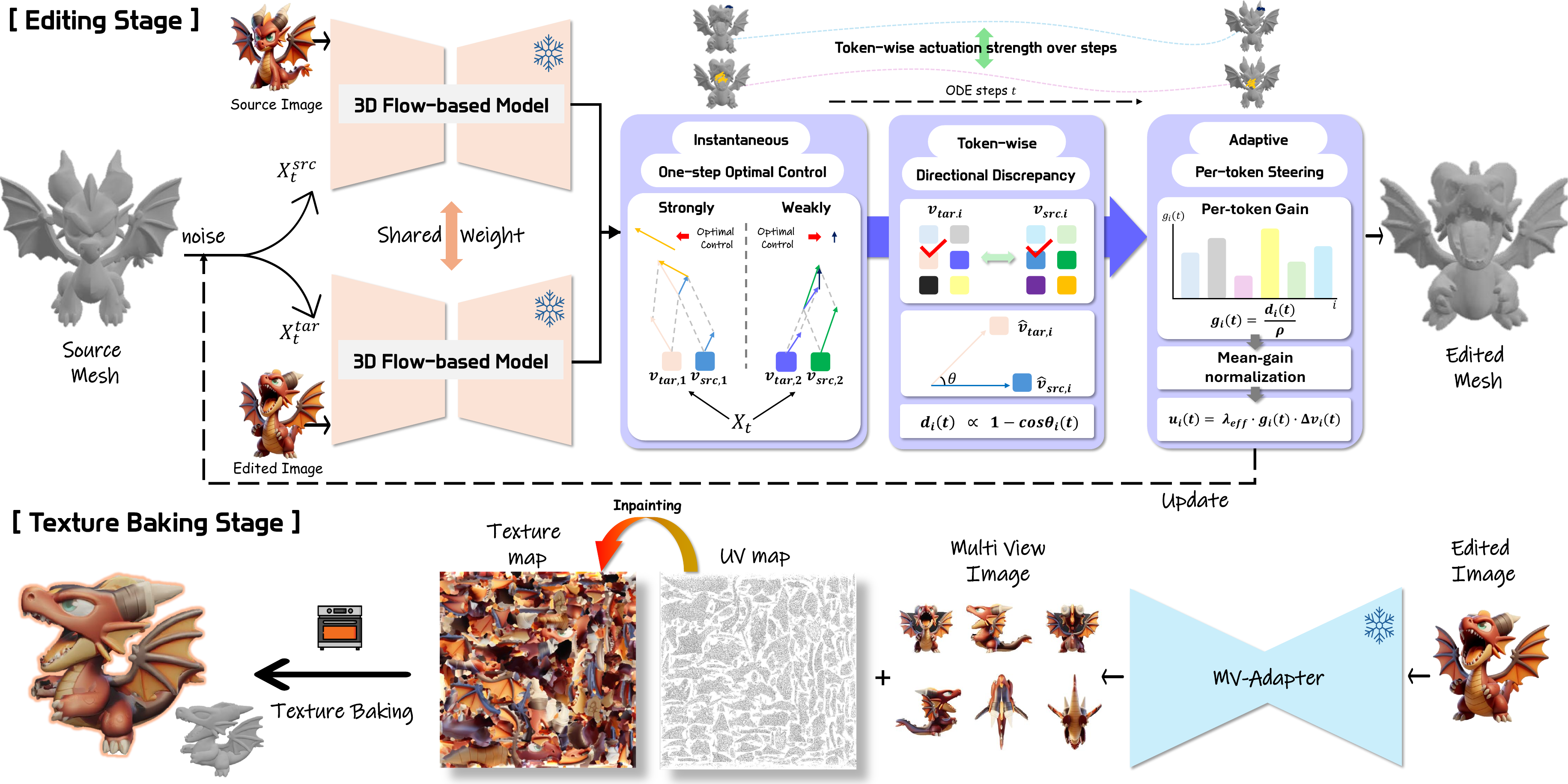}}
    \caption{\textbf{Overview of TanGO.}
Editing stage: At each ODE step, we compute token-wise source/target velocities and their difference $\Delta v_i(t)$.
To selectively amplify update signals in editable regions while suppressing those in regions to be preserved, we formulate an instantaneous one-step optimal control that yields a per-token steering gain and applies $u_i(t)$.
The demand $d_i(t)$ is defined by the token-wise directional discrepancy between normalized velocities, and mean-gain normalization keeps the overall guidance energy consistent across steps.
Texture baking stage: We render multi-view images via an MV-Adapter, construct UV/texture maps, inpaint missing texels, and bake the texture onto the edited mesh.}
    \label{fig:fig5}
  \end{center}
\end{figure*}

This locality is also supported quantitatively. Across multiple object categories, we find that only a subset of the 4096 latent tokens becomes active for a given object. As shown in Fig.~\ref{fig:fig4}(b), the number of active tokens remains well below the full token budget, indicating that generation in VecSet space is sparse rather than uniformly dense.

To quantify locality more directly, we compute a locality score to measure the spatial concentration of each token's geometric influence.
As illustrated in Fig.~\ref{fig:fig4}(c), the majority of tokens achieve high locality scores (e.g., above 0.96) across datasets. Therefore, while self-attention enables global token interactions, the geometric effect of individual tokens remains highly localized in our model.

This observation has an important implication for training-free 3D editing: if different tokens correspond to different local geometric regions, then editing should not apply a uniform control signal to all tokens. Instead, control should be selectively strengthened for tokens in modified areas while being suppressed for tokens in preserved regions. Motivated by this, we propose \textbf{TanGO}, which derives token-wise control signals directly from the source and target velocity fields to perform localized, geometry-aware steering without explicit spatial masks.

\section{Method}
Existing training-free 3D editing methods steer the latent trajectory using the source--target velocity difference
$\Delta v = v_{\mathrm{tar}} - v_{\mathrm{src}}$, but typically apply a \emph{token-shared} scaling schedule.
As a result, editable and preserved regions are often perturbed together.
Motivated by the token-wise spatial locality of VecSet latents (sec.~\ref{sec:3.3}), we cast editing as \emph{tangent-space control} and decouple two coupled questions at every ODE step:
\textbf{(i) where to actuate} (localization across tokens) and \textbf{(ii) how strongly to actuate} (trajectory-level guidance). TanGO addresses this by casting editing as a per-token \emph{allocation} problem: a closed-form minimal-intervention control law is fully determined once we specify a principled token-wise demand $d_i(t)$, while a trajectory-level normalization stabilizes the overall guidance energy (Fig.~\ref{fig:fig5}). Additionally, to overcome the absence of a native texture pipeline in VecSet models, we integrate a process that generates and bakes texture maps using multi-view diffusion\cite{huang2025mv} and inpainting models\cite{suvorov2022resolution}.

\subsection{Instantaneous One-step Optimal Control}
\label{sec:one_step_oc}
We perturb the source dynamics by a token-wise control input $u_i(t)$, yielding
\begin{equation}
\dot x_t^i = v_{\mathrm{src},i}(t) + u_i(t).
\end{equation}
At each ODE step, we choose $u_i(t)$ to maximize first-order progress toward the target tangent direction $\Delta v_i(t)$,
while penalizing excessive intervention with a quadratic control-energy regularizer:
\begin{equation}
\max_{u_i}\;
d_i(t)\,\langle u_i,\Delta v_i(t)\rangle
-\frac{\rho}{2}\|u_i\|^2
\quad
\text{s.t.}\quad
u_i \in \mathrm{span}(\Delta v_i(t)),
\label{eq:one_step_oc}
\end{equation}
where $d_i(t)\ge 0$ is the token-wise demand and $\rho>0$ penalizes control energy.
The span constraint enforces a minimal-intervention principle: it restricts updates to the only direction that directly
transfers the source flow toward the target flow, thereby avoiding orthogonal perturbations that may harm preservation.
Parameterizing $u_i(t)=g_i(t)\Delta v_i(t)$ reduces Eq.~\eqref{eq:one_step_oc} to a concave quadratic in $g_i(t)$, which admits the
closed-form solution
\begin{equation}
g_i^*(t)=\frac{d_i(t)}{\rho},
\qquad
u_i^*(t)=\frac{d_i(t)}{\rho}\Delta v_i(t).
\label{eq:ui_star_compact}
\end{equation}
Eq.~\eqref{eq:ui_star_compact} shows that TanGO reduces editing to a per-token \emph{allocation} rule: each token receives a gain
proportional to its demand $d_i(t)$, while $\Delta v_i(t)$ preserves the full transformation direction.
Thus, the entire control law is determined once we specify a principled demand $d_i(t)$.
In the next section, we design $d_i(t)$ to be (i) invariant to velocity magnitude, (ii) bounded to prevent spurious spikes,
and (iii) informative of whether token $i$ already follows the target-conditioned tangent direction.

\subsection{Token-wise Directional Discrepancy}
\label{sec:directional_discrepancy}

We define $d_i(t)$ from a probabilistic model of \emph{directional} consistency between the source and target-conditioned
velocity predictions. Since velocity magnitudes vary across timesteps and assets, we use directional agreement as a
scale-stable indicator of whether token $i$ already follows the target-conditioned flow.
We normalize token-wise velocities and compute their cosine similarity as
\begin{equation}
\hat v_{\mathrm{src},i}(t)=\frac{v_{\mathrm{src},i}(t)}{\|v_{\mathrm{src},i}(t)\|+\varepsilon},\qquad
\hat v_{\mathrm{tar},i}(t)=\frac{v_{\mathrm{tar},i}(t)}{\|v_{\mathrm{tar},i}(t)\|+\varepsilon},
\end{equation}
\begin{equation}
\cos\theta_i(t)=\hat v_{\mathrm{src},i}(t)^\top \hat v_{\mathrm{tar},i}(t).
\end{equation}
Because $\hat v_{\mathrm{src},i},\hat v_{\mathrm{tar},i}$ lie on the unit hypersphere, we model $\hat v_{\mathrm{tar},i}$ as
directional data concentrated around mean direction $\hat v_{\mathrm{src},i}$ using the von Mises-Fisher (vMF) likelihood:
\begin{equation}
p(x\mid \mu,\kappa)=C_d(\kappa)\exp\!\big(\kappa\,\mu^\top x\big),
\qquad
x=\hat v_{\mathrm{tar},i}(t),~\mu=\hat v_{\mathrm{src},i}(t).
\end{equation}
Up to an additive constant independent of $x$, the negative log-likelihood satisfies
\begin{equation}
-\log p(\hat v_{\mathrm{tar},i}\mid \hat v_{\mathrm{src},i},\kappa) = \text{const}-\kappa\cos\theta_i(t)
~\equiv~ \text{const}+\kappa(1-\cos\theta_i(t)).
\end{equation}
We therefore define the demand as the normalized directional negative log-likelihood:
\begin{equation}
d_i(t)=1-\cos\theta_i(t),
\label{eq:di_def}
\end{equation}
and absorb the concentration to energy ratio $\kappa/\rho$ into the global scale $\lambda$ used later.
This yields a bounded and geometry-consistent signal: $d_i(t)\in[0,2]$ and
$\|\hat v_{\mathrm{src},i}-\hat v_{\mathrm{tar},i}\|^2=2\,d_i(t)$.
Hence, aligned directions receive near-zero demand, while larger mismatch implies stronger actuation.
Eq.~\eqref{eq:di_def} corresponds to the (scaled) vMF negative log-likelihood over directions after removing constants
independent of $\hat v_{\mathrm{tar},i}$. \footnote{We provide the detailed vMF derivation and related geometric identities in the Appendix.}

\begin{figure*}[t]
  \begin{center}
    \centerline{\includegraphics[width=\textwidth]{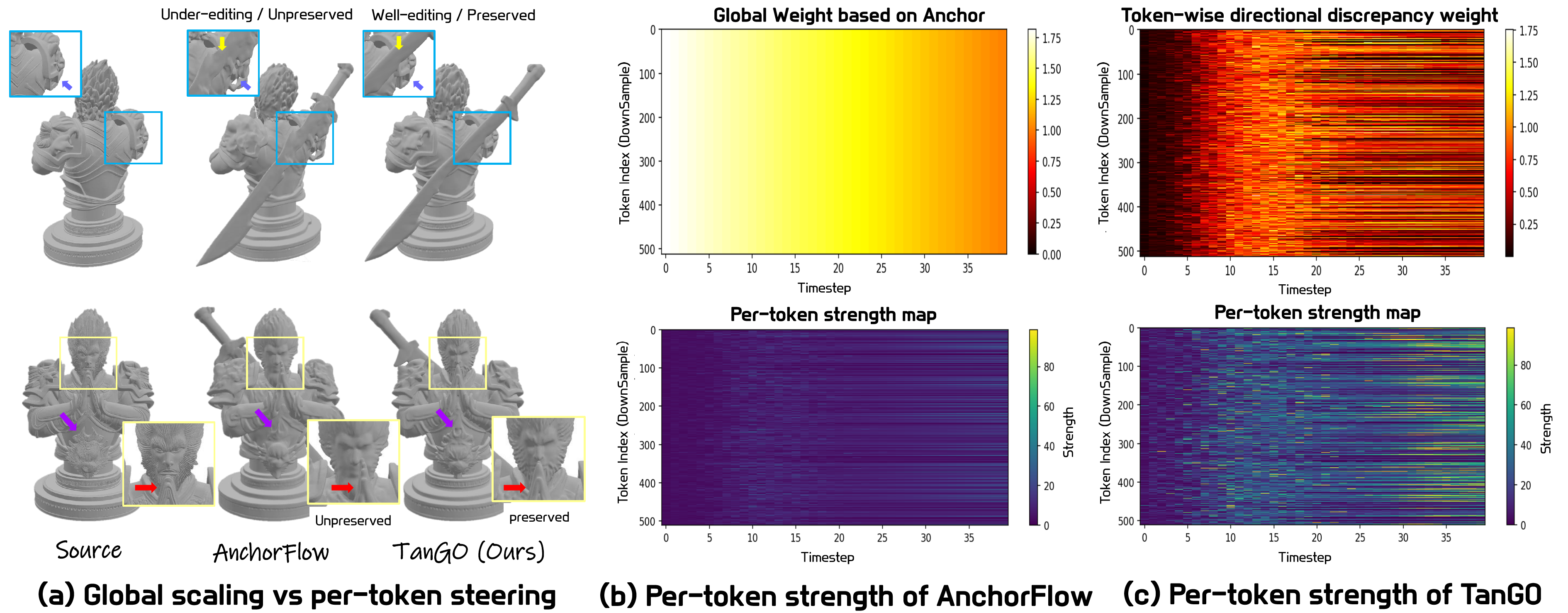}}
    \caption{(a) Comparison between AnchorFlow (global scaling) and TanGO (per-token steering). AnchorFlow exhibits under-editing and unpreserved regions, whereas TanGO yields well-edited results. (b) Token-wise contribution of AnchorFlow across timesteps $t$, shown via a global weight map and per-token strength map. (c) Per-token strength of TanGO, demonstrating its ability to adaptively assign stronger signals to target tokens and weaker signals to regions to be preserved.}
    \label{fig:fig6}
  \end{center}
\end{figure*}

\subsection{Adaptive Per-token Steering}
\label{sec:adaptive_steering}

Given $d_i(t)$, we define the raw per-token gain $g_i(t)=\frac{d_i(t)}{\rho}$. This decouples \emph{where} to edit from \emph{how strongly} to edit: $g_i(t)$ allocates actuation across tokens based on
directional mismatch (thus insensitive to magnitude), while $\Delta v_i(t)$ retains the full direction and magnitude needed to
execute the transformation.
However, the mean gain $\bar g(t)=\frac{1}{N}\sum_i g_i(t)$ can fluctuate over time, inducing unintended variation in the
overall guidance energy.
We stabilize global edit strength via mean-gain normalization:
\begin{equation}
\lambda_{\mathrm{eff}}(t)=\lambda \cdot \frac{\eta}{\bar g(t)+\varepsilon},
\label{eq:lambda_eff}
\end{equation}
where $\eta$ is a target energy level and $\varepsilon$ prevents division by zero.
The final token-wise control and dynamics are

\begin{align}
u_i(t) &= \lambda_{\mathrm{eff}}(t)\, g_i(t)\, \Delta v_i(t), \label{eq:optimal_u}\\
\dot x_t^i &= v_{\mathrm{src},i}(t)+u_i(t). \label{eq:final_dynamics}
\end{align}

Fig.~\ref{fig:fig6} visualizes this decoupling.
Unlike a global scaling that offers limited token selectivity, TanGO produces a non-uniform
$d_i(t)$ and concentrates effective steering on high-demand tokens while suppressing changes on tokens corresponding to regions to be
preserved, enabling localized, mask-free edits.

\section{Experiment}
\subsection{Implementation Details}
\paragraph{\textbf{Settings}} The editing process integrates over $T = 50$ timesteps from $n_{\max} = 41$ down to $n_{\min} = 1$. We set the classifier-free guidance scales to $s_{\text{src}} = 3.5$ and $s_{\text{tar}} = 7.5$, with additional default parameters $\lambda = 5.0$ and $\eta = 0.2$. The final edited latent $X_0^{\text{edit}}$ is directly decoded into 3D space. All experiments are conducted on NVIDIA A100 GPUs.

\paragraph{\textbf{TanGOEdit Datasets}} To systematically evaluate the capability of training-free 3D editing and topology preservation, we construct a comprehensive benchmark named TanGOEdit.\footnote{A detailed description of the dataset is provided in the Appendix.}
Comprising exactly 100 high-quality editing samples, our benchmark is built by carefully selecting 3D assets with high aesthetic scores from the Objaverse-XL \cite{deitke2023objaverse}, TRELLIS-500K \cite{xiang2025structured}, and Google Scanned Objects (GSO) \cite{downs2022google} datasets.
Inspired by Nano3D \cite{Ye2025Nano3D}, we utilize Qwen 2.5 VL \cite{bai2025qwen25vltechnicalreport} to generate diverse editing instructions.

\paragraph{\textbf{Metrics.}} To comprehensively assess the editing efficacy, we utilize three complementary metrics. Semantic alignment is evaluated via $\text{CLIP-T}$~\cite{radford2021learning}, which calculates the correspondence between the rendered views of the edited 3D geometry and the target text prompt. For structural and identity preservation, we compute $\text{CLIP-I}$~\cite{radford2021learning} and $\text{DINO-I}$~\cite{caron2021emerging} to measure the visual and feature-level similarity between the edited renderings and the source condition image.

\begin{figure*}[t!]
  \begin{center}
    \centerline{\includegraphics[width=\textwidth]{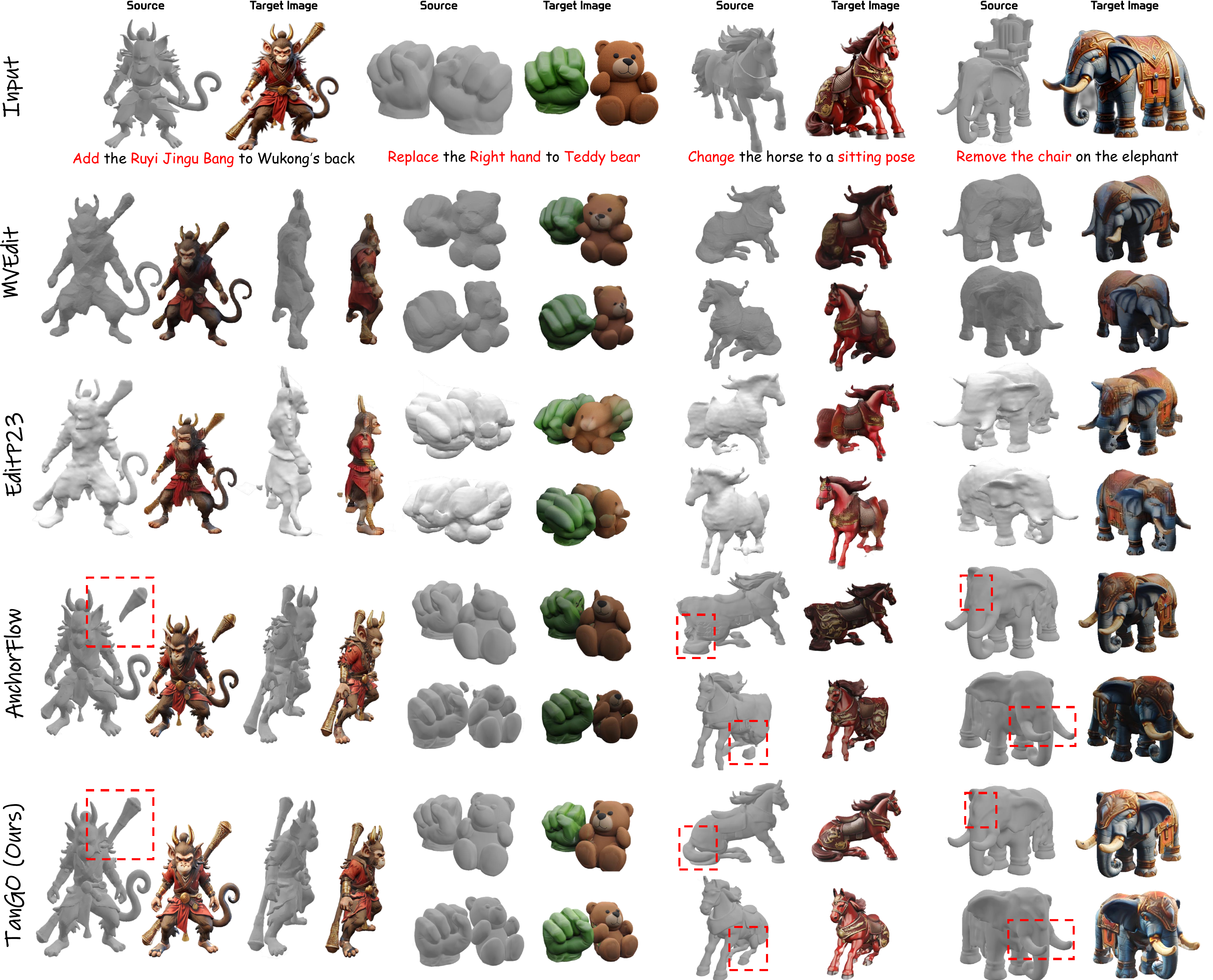}}
    \caption{ \textbf{Qualitative comparisons on text-driven 3D editing.} Compared to MVEdit, EditP23, and AnchorFlow, TanGO achieves highly localized edits across diverse tasks while preserving regions that should remain unchanged, effectively avoiding unintended artifacts (see the red boxes).}
    \label{fig:fig7}
  \end{center}
\end{figure*}
\subsection{Qualitative Results}
Fig.~\ref{fig:fig1},~\ref{fig:fig7} presents qualitative comparisons using our newly constructed TanGOEdit dataset across diverse text-driven 3D editing tasks, including object addition, part replacement, pose change, and object removal. Compared to baseline methods, TanGO achieves highly localized edits while strictly preserving the geometry in regions that are not intended to change.
In particular, for rigid editing tasks such as addition, replacement, and removal, baseline methods often suffer from under-editing due to limited editing strength, or they propagate edits to adjacent areas, leading to undesired deformations or the loss of fine structures. These failures often appear as boundary bleeding or collateral distortions in nearby parts (e.g., unintended warping or collapse), even when the requested edit is spatially confined. Conversely, TanGO successfully modifies the target elements without introducing noticeable distortions to adjacent body parts.
For non-rigid editing tasks, such as pose changes, our method better maintains global shape coherence and prevents the mesh from breaking or stretching. This indicates that the per-token steering mechanism operates stably even under strong transformations. Importantly, this stability is where naive global guidance often destabilizes the trajectory and amplifies artifacts.
Overall, these results demonstrate that TanGO can successfully control and decouple edit strength from source preservation, yielding stable and geometry-consistent edits without the need for external masks or additional training.

\captionsetup[table]{position=top, skip=6pt}

\sisetup{
  detect-weight=true,
  detect-inline-weight=math,
  table-number-alignment=center
}

\newcommand{\NA}{\multicolumn{1}{c}{--}}

\begin{table*}[t]
\centering

\begin{minipage}[t]{0.48\textwidth}
\centering
\caption{\textbf{Quantitative comparison on TanGOEdit.}
Our proposed TanGO consistently achieves the highest scores across all metrics: DINO-I, CLIP-I, and CLIP-T, demonstrating superior semantic alignment and generation fidelity compared to state-of-the-art 3D editing baselines.}
\label{tab:quant_comp}
\scriptsize
\begin{tabular}{l
S[table-format=1.4]
S[table-format=1.4]
S[table-format=1.4]}
\toprule
\textbf{Method} & {DINO-I$\uparrow$} & {CLIP-I$\uparrow$} & {CLIP-T$\uparrow$} \\
\midrule
MVEdit ~\cite{chen2024generic}      & 0.5861 & 0.4994   & 0.1289    \\
EditP23 ~\cite{bar2025editp23}     & 0.5838 & 0.4971   & 0.1313    \\
FlowEdit ~\cite{kulikov2025flowedit}    & 0.6127 & 0.7205 & 0.2081 \\
AnchorFlow ~\cite{zhou2026anchorflow}  & 0.6426 & 0.7492   & 0.2124    \\
\midrule
\textbf{TanGO(Ours)} & \textbf{0.6510} & \textbf{0.7679} & \textbf{0.2301} \\
\bottomrule
\end{tabular}
\end{minipage}
\hfill
\begin{minipage}[t]{0.48\textwidth}
\centering
\caption{\textbf{Results of the User Study.}
Participants strongly preferred TanGO over prior methods across all criteria. Our method significantly outperforms AnchorFlow, particularly in shape preservation (87\%) and visual quality (84\%), confirming its effectiveness in practical editing scenarios.}
\label{tab:user_study}
\scriptsize
\begin{tabular}{l c c c}
\toprule
\textbf{Method} & \textbf{Prompt} & \textbf{Visual} & \textbf{Shape} \\
                & \textbf{Align.}  & \textbf{Quality} & \textbf{Preserv.} \\
\midrule
AnchorFlow ~\cite{zhou2026anchorflow} & 21\% & 16\% & 13\% \\
TanGO(Ours) & 79\% & 84\% & 87\% \\
\bottomrule
\end{tabular}
\end{minipage}

\end{table*}

\subsection{Quantitative Results}

In this section, we compare \textbf{TanGO} with existing 3D editing methods, including MVEdit, EditP23, FlowEdit, and AnchorFlow. As shown in Table~\ref{tab:quant_comp}, TanGO achieves the best performance on the main TanGOEdit benchmark, with DINO-I, CLIP-I, and CLIP-T scores of 0.6510, 0.7679, and 0.2301, respectively. The improvement in CLIP-T indicates stronger alignment with the target editing instruction, while the gains in DINO-I and CLIP-I suggest better preservation of the source identity and visual structure. These results support our claim that adaptive per-token steering provides a more favorable balance between semantic editability and source preservation than prior global trajectory steering methods.

The user study in Table~\ref{tab:user_study} further supports this trend. Participants preferred TanGO over AnchorFlow in prompt alignment, visual quality, and shape preservation, with preference rates of 79\%, 84\%, and 87\%, respectively. The largest margin is observed in shape preservation, indicating that TanGO more effectively suppresses unintended changes in preserved regions while maintaining the requested semantic edit.

We additionally conduct a mask-aware 3D evaluation on a separate 150-sample benchmark, consisting of 100 samples from Edit3DBench~\cite{li2025voxhammer} and 50 TanGOEdit samples annotated in Blender. This separate protocol is necessary because 3D geometric preservation metrics require ground-truth edit masks, which are not available for the full TanGOEdit benchmark. Under this setting, TanGO also outperforms MVEdit, EditP23, FlowEdit, AnchorFlow, VoxHammer, and Nano3D on 3D-native text--shape alignment and mask-aware geometric preservation metrics, including Uni3D$_{\mathrm{txt}}$, CD$_{\mathrm{preserve}}$, and NC$_{\mathrm{preserve}}$. Detailed quantitative results and qualitative comparisons are provided in the supplementary material.

\begin{figure*}[t!]
  \begin{center}
    \centerline{\includegraphics[width=\textwidth]{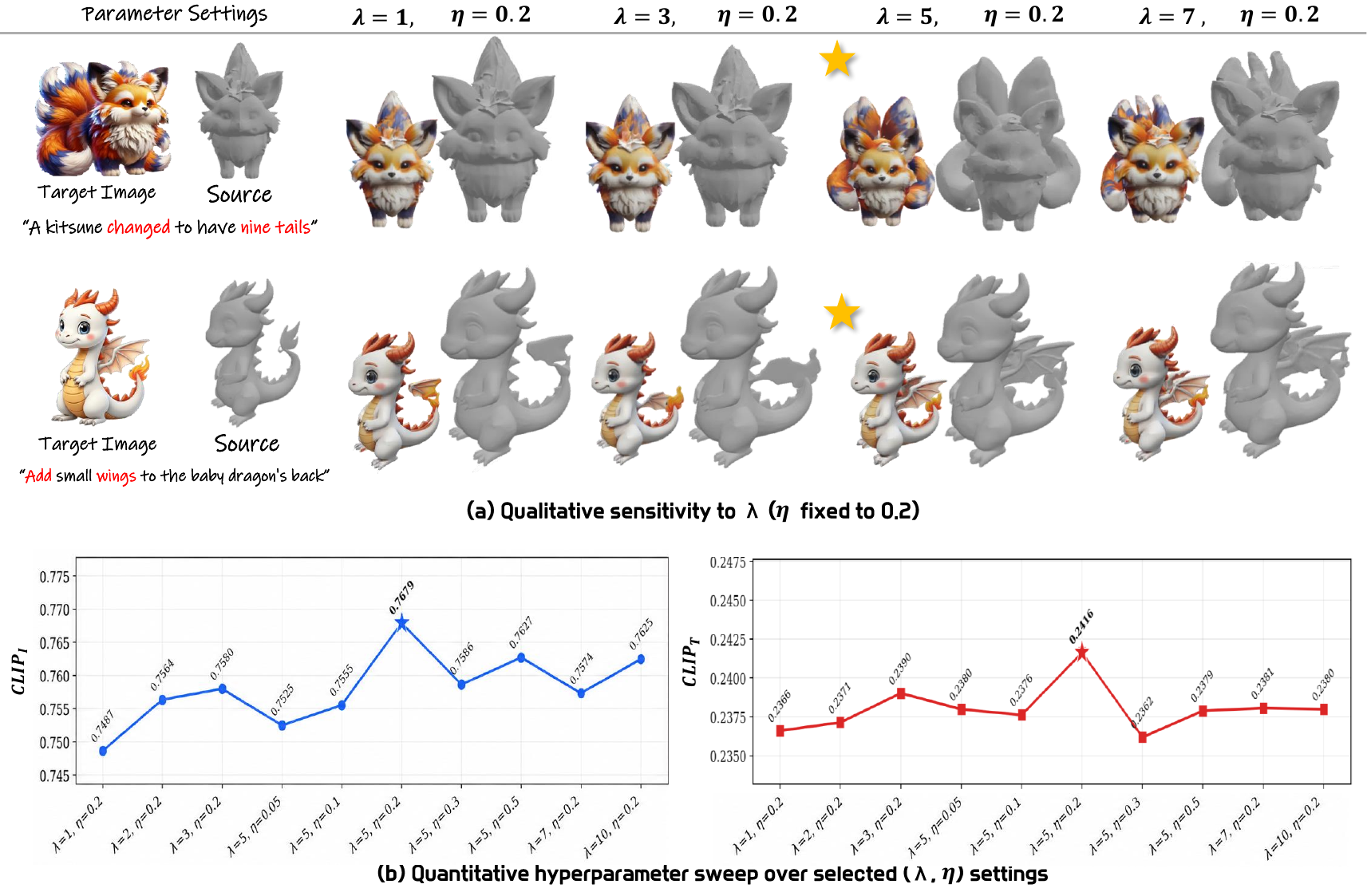}}
    \caption{ \textbf{Qualitative and Quantitative Analysis of Parameter Selection} (a) Qualitative comparison under varying $\lambda$ settings ($\eta=0.2$). The optimal setting $\lambda=5$ (marked with a star) achieves clear geometric modifications, such as the distinct nine tails and wings, while preserving source integrity.
    (b) Quantitative hyperparameter sweep on \textbf{TanGOEdit} datasets. The CLIP-I and CLIP-T scores peak at $\lambda=5, \eta=0.2$, justifying our choice of optimal default parameters.
    }
    \label{fig:fig8}
    \vspace{-0.8cm}
  \end{center}
\end{figure*}

\subsection{Ablation Study}
\paragraph{\textbf{Hyperparameter sensitivity}} We study the effect of hyperparameters on performance by presenting both qualitative and quantitative analyses.
We first examine how varying $\lambda$ changes the trade-off between edit strength and preservation, and then report CLIP-based alignment scores over a range of $(\lambda,\eta)$ settings.

Fig.~\ref{fig:fig8}(a) shows qualitative results when increasing $\lambda$ while fixing $\eta{=}0.2$.
As $\lambda$ grows, edits become stronger, but overly large values may start to degrade preservation.
Across the tested examples, we observe that our default setting provides a favorable balance between effective editing and geometric preservation.

Fig.~\ref{fig:fig8}(b) visualizes CLIP-based alignment scores (CLIP-I and CLIP-T) evaluated on multiple $(\lambda,\eta)$ combinations.
Specifically, we vary $\lambda \in \{1,2,3,5,7,10\}$ and combine it with $\eta \in \{0.05, 0.1,0.2,0.3,0.5\}$.
The scores change smoothly across settings rather than exhibiting sharp spikes or drops, and the default choice $(\lambda{=}5, \eta{=}0.2)$ lies near a high-performing region.
Overall, these results suggest that TanGO behaves stably under moderate variations of $(\lambda,\eta)$.

\begin{figure*}[t]
  \begin{center}
    \centerline{\includegraphics[width=\textwidth]{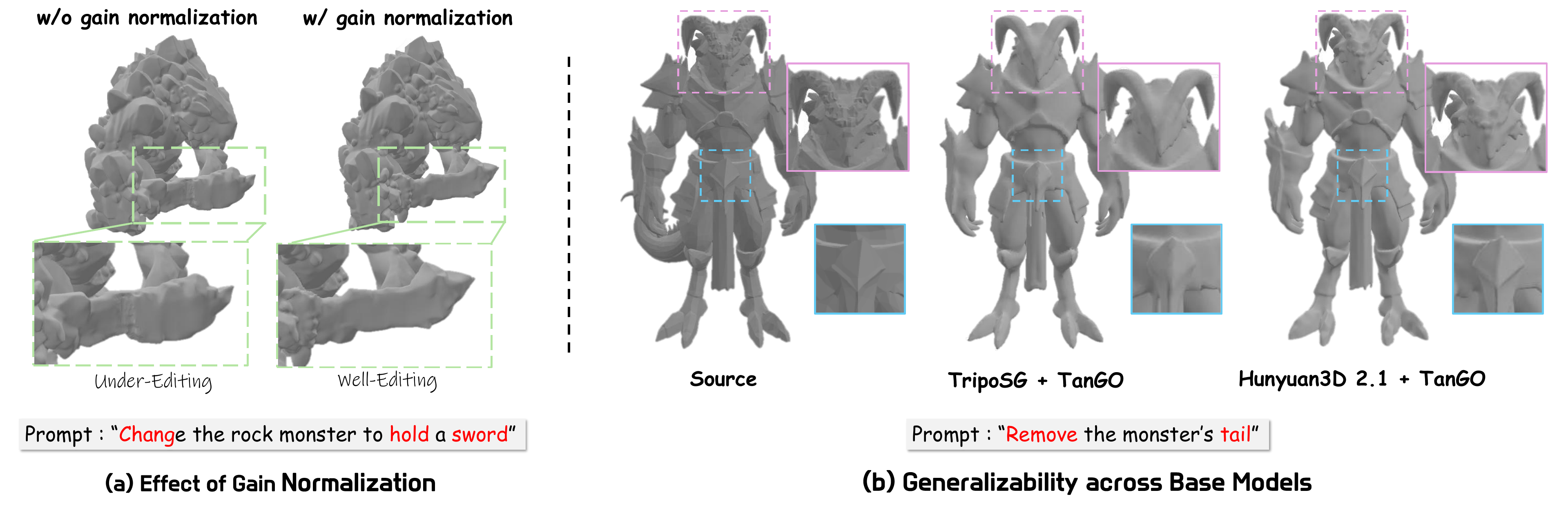}}
    \caption{\textbf{Qualitative Analysis of TanGO Components.}
    (a) Gain Normalization prevents under-editing, ensuring accurate geometric changes (e.g., holding a sword). (b) TanGO shows high generalizability across various base models like TripoSG and Hunyuan3D 2.1.}
    \label{fig:fig9}
  \end{center}
\end{figure*}

\paragraph{\textbf{Effect of gain normalization}}
To verify the necessity of the proposed gain normalization, we conduct an ablation study by replacing the timestep dependent scaling factor $\lambda_{\mathrm{eff}}(t)=\lambda\cdot \frac{\eta}{\bar g(t)+\varepsilon}$ with a fixed constant $\lambda$. This experiment aims to analyze how the mechanism that compensates for the mean gain $\bar g(t)$ which fluctuates significantly along the integration trajectory affects the final editing quality.

Our results show that without gain normalization, it becomes difficult to maintain a consistent steering strength throughout the entire editing process. Even when the relative weights for each token remain the same, the lack of normalization causes the guidance to be too weak at certain timesteps, leading to \textit{under-editing} where the target changes are not fully realized. Conversely, at other stages, the guidance can become overly aggressive, potentially harming the structural stability of the mesh.

Fig.~\ref{fig:fig9}(a) visually demonstrates this effect through a representative example. When $\lambda_{\mathrm{eff}}$ is set to a fixed value, the model lacks sufficient power to implement the target shape. In contrast, the full TanGO model with gain normalization provides a balanced guide across all timesteps. This allows for strong and stable edits while preserving the structural integrity of the surrounding geometry. Ultimately, gain normalization ensures that the global edit strength is optimized without changing the per-token steering patterns, making the editing process more robust even during significant shape transformations.

\paragraph{\textbf{Generalizability of TanGO}}
To evaluate whether TanGO generalizes beyond Hunyuan3D 2.1, we apply the same token-wise steering mechanism to TripoSG, another VecSet-based 3D foundation model. As shown in Fig.~\ref{fig:fig9}(b), TanGO successfully performs the requested semantic edit without additional training. Quantitative results further show that TanGO improves over AnchorFlow under both 2D alignment metrics and mask-aware 3D preservation metrics, supporting its applicability beyond a single backbone. We observe that TripoSG results exhibit slightly smoother geometry than Hunyuan3D 2.1, which is mainly due to the reconstruction capacity of the underlying 3D VAE rather than the TanGO optimization process itself. Detailed cross-model results are provided in the supplementary material.

\paragraph{\textbf{Ablation on Per-token Gain Design}}
To verify that the proposed directional demand is not merely an arbitrary token-wise rescaling, we compare TanGO with simpler gain designs, including global scaling, L2-magnitude-based gain $|v_{\mathrm{tar},i}-v_{\mathrm{src},i}|$, and inner-product-based gain $\langle v_{\mathrm{src},i}, v_{\mathrm{tar},i}\rangle$. TanGO consistently performs better than these alternatives, supporting the use of a bounded and scale-invariant directional discrepancy rather than raw velocity magnitude or unnormalized similarity. Full results are provided in the supplementary material.

\begin{table}[t!]
\centering
\caption{Comparison of inference time across different 3D editing frameworks.}
\label{tab:time_comparison}
\scriptsize
\begin{tabular}{lccccc}
\toprule
 & MVEdit & EditP23 & FlowEdit & AnchorFlow & \textbf{TanGO (Ours)} \\
\midrule
Time (s) & 632.45 & 59.32 & 28.13 & 28.76 & 28.73 \\
\bottomrule
\end{tabular}
\end{table}

\paragraph{\textbf{Comparison of Time Cost}}
As shown in Table~\ref{tab:time_comparison}, TanGO achieves an inference time of 28.73s per edit, which is comparable to FlowEdit (28.13s) and AnchorFlow (28.76s), while being much faster than MVEdit (632.45s) and EditP23 (59.32s). This indicates that the proposed per-token steering and gain normalization introduce negligible computational overhead despite adding token-level control to the editing process. As a result, TanGO maintains the efficiency of recent flow-based editing methods while providing more localized and preservation-aware edits, making it practical for iterative 3D design workflows.

\section{Conclusion}
In this paper, we introduced TanGO, a novel, training-free framework for 3D mesh editing that leverages the rich priors of pre-trained flow-based 3D foundation models. By formulating the editing process as an instantaneous one-step optimal control problem, TanGO achieves precise and localized semantic modifications. Our core mechanism, adaptive per-token steering, dynamically scales the control input based on directional discrepancy in the tangent space. This allows for stable editing in targeted regions while preserving the geometry of unedited areas, all without the need for costly fine-tuning or model retraining. While our extensive evaluations demonstrate TanGO's robust generalizability across different network architectures, we also identify that the ultimate preservation of high-frequency geometric details is currently bounded by the representational capacity of the underlying model's 3D VAE. Overcoming this VAE bottleneck potentially through the integration of more expressive latent representations or hybrid pixel-3D refinement strategies presents a promising avenue for future work. Ultimately, TanGO offers a highly efficient and adaptable solution, advancing the practical utility of 3D foundation models for high-fidelity, user-guided 3D creation.

\section*{Acknowledgements}
This work was supported by Institute for Information \& communications Technology Planning \& Evaluation (IITP) grant funded by the Korea government (MSIT) (No.RS-2021-II211381, Development of Causal AI through Video Understanding and Reinforcement Learning, and Its Applications to Real Environments) and (No.RS-2022-II220184, Development and Study of AI Technologies to Inexpensively Conform to Evolving Policy on Ethics).

%
%
\bibliographystyle{splncs04}
\bibliography{main}

\clearpage

\title{Supplementary Material for \\ TanGO: Training-Free 3D Editing via Tangent-space Guidance and Optimization}

\titlerunning{TanGO}

\author{Siwoo Lim\inst{1}\orcidlink{0009-0001-1187-6548} \and
Sunjae Yoon\inst{2}\orcidlink{0000-0001-7458-5273} \and
Gwanhyeong Koo\inst{1}\orcidlink{0009-0005-6455-3223} \and
Hyeonseo Yun\inst{1}\orcidlink{0009-0001-2486-1142} \and
Chang D. Yoo\inst{1}\thanks{Corresponding Author}\orcidlink{0000-0002-0756-7179}}

\authorrunning{S. Lim et al.}

\institute{Korea Advanced Institute of Science and Technology, Daejeon, Republic of Korea \\
\email{\{siu4450, kookie, hyunseo, cd\_yoo\}@kaist.ac.kr} 
\and
Chung-Ang University, Seoul, Republic of Korea\\
\email{\{sunjaeyoon\}@cau.ac.kr}}

\maketitle


\appendix
\inappendixtoctrue
\supplementarytoc
\clearpage

\section{Spatial Locality of VecSet Tokens Across Different 3D Generative Models}
\label{app:locality_across_models}

In the main paper, we showed that VecSet tokens in Hunyuan3D exhibit strong spatial locality, i.e., each token primarily responds to a limited spatial region of the decoded 3D asset. Here, we further examine whether this phenomenon is specific to a single architecture or persists across different generative models trained with the VecSet representation.

To this end, we compare Hunyuan3D (4096 tokens) with TripoSG (2048 tokens), which adopts a different architecture and token budget. Although the two models show substantially different token activation patterns, both exhibit consistently high spatial locality. This provides additional evidence that token-wise spatial locality is not tied to a particular sparsity regime, but persists across different VecSet-based 3D generative models.

\subsection{Attention-based Spatial Locality Measure}
\label{app:locality_metric}

To quantitatively compare token-wise spatial locality across models, we use an attention-based proxy derived from the decoder cross-attention over sampled surface points. Intuitively, a token is considered spatially localized if the surface points that strongly attend to that token are concentrated within a compact region of the 3D shape.

\paragraph{\textbf{Surface sampling.}}
Given a decoded mesh, we uniformly sample \(M=30{,}000\) points from the mesh surface. Let the sampled points be denoted by
\begin{equation}
\mathcal{P}=\{p_i\}_{i=1}^M, \qquad p_i \in \mathbb{R}^3.
\end{equation}

\paragraph{\textbf{Cross-attention capture.}}
For each sampled surface point \(p_i\), we extract the cross-attention weights from the last decoder block. We record the top-\(K\) tokens with the highest attention values for that point, where \(K=5\) in our implementation. Let \(\alpha_{ij}\) denote the attention weight assigned from point \(p_i\) to token \(j\). We only retain token-point assignments for which token \(j\) appears in the top-\(K\) list of point \(p_i\).

\paragraph{\textbf{Active tokens.}}
A token \(j\) is considered \emph{active} for a given sample if it appears in the top-\(K\) attention list of at least one sampled surface point. Let
\begin{equation}
\mathcal{I}_j = \{ i \in \{1,\dots,M\} \mid j \in \mathrm{TopK}(p_i)\}
\end{equation}
be the set of sampled points associated with token \(j\). Tokens with \(|\mathcal{I}_j|=0\) are treated as inactive and excluded from the locality-score aggregation.

\paragraph{\textbf{Token centroid.}}
For each active token \(j\), we define its attention-weighted spatial centroid as
\begin{equation}
c_j
=
\frac{\sum_{i \in \mathcal{I}_j} \alpha_{ij} \, p_i}
{\sum_{i \in \mathcal{I}_j} \alpha_{ij}}.
\label{eq:token_centroid}
\end{equation}

\paragraph{Token spread.}
We then measure how spatially concentrated token \(j\)'s associated points are around the centroid using the attention-weighted root-mean-square distance
\begin{equation}
\sigma_j
=
\sqrt{
\frac{\sum_{i \in \mathcal{I}_j} \alpha_{ij}\,\|p_i-c_j\|^2}
{\sum_{i \in \mathcal{I}_j} \alpha_{ij}}
}.
\label{eq:token_spread}
\end{equation}
A smaller \(\sigma_j\) indicates that the token's attention is concentrated on a compact spatial region.

\paragraph{Locality score.}
To make the spread comparable across shapes of different scales, we normalize it by the diagonal length of the mesh bounding box:
\begin{equation}
d_{\mathrm{bbox}}
=
\left\|
\max_{i} p_i - \min_{i} p_i
\right\|_2.
\label{eq:bbox_diag}
\end{equation}
The locality score of token \(j\) is defined as
\begin{equation}
\mathrm{Locality}_j
=
1-\frac{\sigma_j}{d_{\mathrm{bbox}}}.
\label{eq:locality_score}
\end{equation}
Thus, a higher value indicates more spatially concentrated token behavior, while a lower value indicates more spatially diffuse attention.

\paragraph{\textbf{Aggregation.}}
For each sample, we compute the locality score for all active tokens and report the mean over active tokens. The final model-level statistic is reported as mean \(\pm\) standard deviation across all evaluated samples.

\paragraph{\textbf{Remark.}}
This metric does not directly measure geometric deformation under explicit token perturbation. Rather, it provides a consistent proxy for how spatially concentrated each token's decoder interaction is over the 3D surface. We use it here because it allows a unified comparison across different VecSet-based generators.

\subsection{Cross-model Comparison}
\label{app:locality_results}

Table~\ref{tab:locality_across_models} summarizes the comparison between Hunyuan3D and TripoSG.

\begin{table}[t]
\centering
\caption{Comparison of token activation and spatial locality across different VecSet-based 3D generative models.}
\label{tab:locality_across_models}
\begin{tabular}{lcc}
\toprule
 & Hunyuan3D & TripoSG \\
\midrule
Total tokens & 4096 & 2048 \\
Active tokens (avg) & \(283/4096\) (\(6.9\%\)) & \(2048/2048\) (\(100\%\)) \\
Locality score (avg) & \(0.9852 \pm 0.004\) & \(0.9747 \pm 0.017\) \\
\bottomrule
\end{tabular}
\end{table}

The two models exhibit markedly different activation regimes. Hunyuan3D activates only a small subset of tokens per sample, showing a sparse activation pattern in which each active token tends to cover a relatively broad region. In contrast, TripoSG activates nearly all tokens, corresponding to a much denser partition of the 3D surface. This indicates that the two models distribute representation across tokens in substantially different ways, despite sharing the same VecSet-based formulation.

Despite this difference, both models achieve similarly high locality scores. This means that token-point interactions remain strongly concentrated in local spatial regions under both sparse and dense activation regimes. Therefore, token-wise spatial locality is not merely an artifact of sparsity or a property specific to a single architecture, but appears to persist across different VecSet-based generators.

These findings strengthen the motivation of our method: if tokens correspond to spatially localized regions even across different VecSet-based generators, then explicit token-wise control is a principled way to improve localized editing. Rather than applying a uniform global steering signal, our results suggest that more precise editing can be achieved by selectively controlling tokens according to their localized geometric roles.

\begin{figure}[t]
\centering
\includegraphics[width=\linewidth]{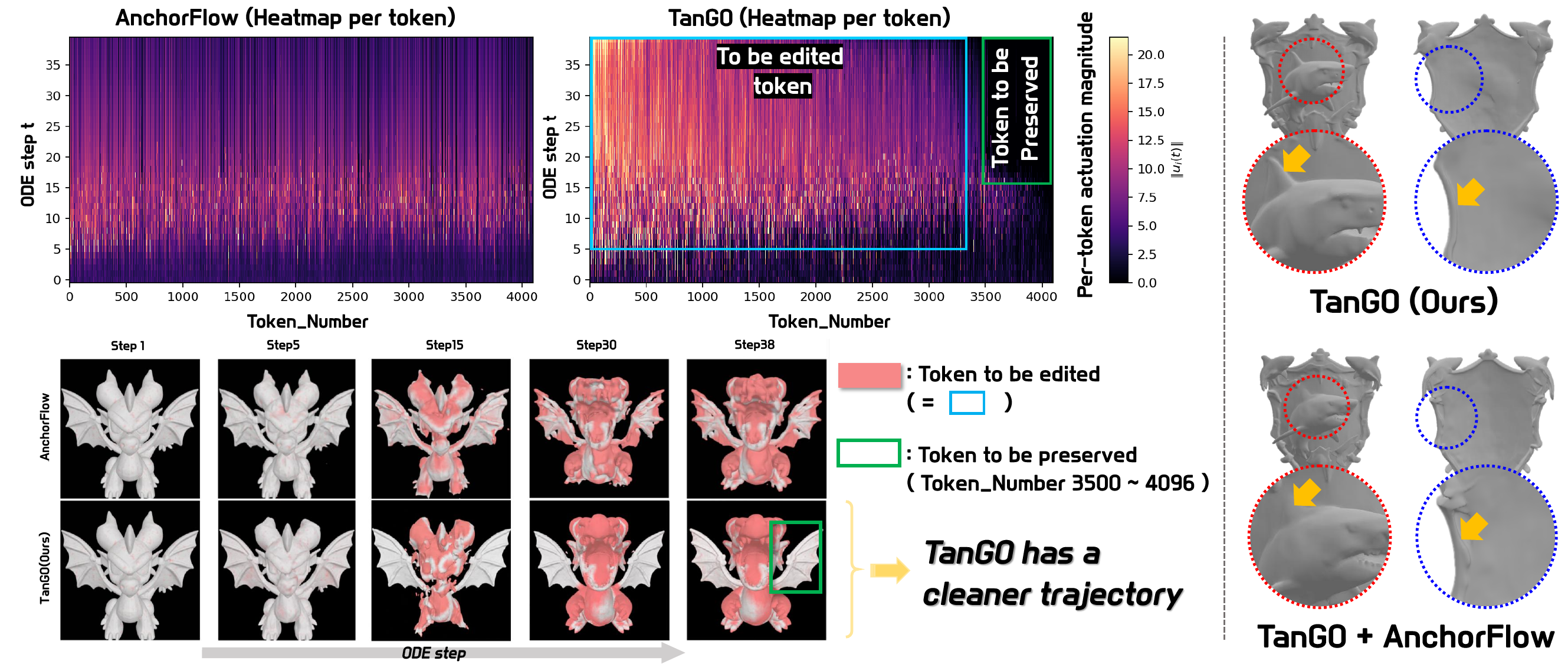}
\caption{\textbf{Per-token steering trajectory.} \textbf{(left)} Heatmaps of $\|u_i(t)\|$ and mesh decoding show TanGO has a cleaner trajectory. \textbf{(right)} Naive composition perturbs the preserved regions.}
\label{fig:fig10}
\end{figure}

\subsection{Token-wise Steering Trajectory}

We further visualize how TanGO modulates token-wise editing responses across ODE steps. Fig.~\ref{fig:fig10} shows the per-token actuation magnitude $|u_i(t)|$ together with intermediate mesh decoding results along the editing trajectory. Compared with AnchorFlow, TanGO produces a more selective actuation pattern: it concentrates stronger signals on tokens associated with regions to be edited, while suppressing responses on tokens corresponding to regions that should remain preserved.

This token-wise behavior leads to a cleaner editing trajectory. In AnchorFlow, the steering signal is distributed more uniformly across tokens, and broad regions of the mesh are already perturbed at early ODE steps. In contrast, TanGO progressively refines the intended target region while maintaining the surrounding geometry. These results support our claim that adaptive per-token steering provides more localized control than global trajectory-level guidance.

We also examine a naive combination of TanGO with AnchorFlow. As shown in Fig.~\ref{fig:fig10}, the direct composition does not further improve TanGO and can perturb preserved regions. This suggests that effective integration with anchor-based methods would require token-aware anchoring rather than a direct global-anchor composition.

\section{Detailed Proofs and Derivations}
\label{app:proofs}

In this appendix, we provide detailed derivations for the one-step optimal control
in Sec.~4.1, the vMF-based directional discrepancy in Sec.~4.2, and the mean-gain
normalization in Sec.~4.3.

\subsection{Closed-form Solution of the One-step Optimal Control}
\label{app:one_step_control}

Recall the instantaneous one-step control problem in Eq.~(4):
\begin{equation}
\max_{u_i} \;
d_i(t)\,\langle u_i, \Delta v_i(t) \rangle
-\frac{\rho}{2}\|u_i\|^2
\quad
\text{s.t. }
u_i \in \mathrm{span}\!\bigl(\Delta v_i(t)\bigr),
\label{eq:app_control_problem}
\end{equation}
where \(\rho > 0\) and \(d_i(t)\ge 0\).

\begin{proposition}
\label{prop:closed_form_control}
Assume \(\Delta v_i(t)\neq 0\). Then the optimization problem in
Eq.~\eqref{eq:app_control_problem} has the unique maximizer
\begin{equation}
u_i^*(t)=\frac{d_i(t)}{\rho}\,\Delta v_i(t).
\label{eq:app_closed_form_u}
\end{equation}
\end{proposition}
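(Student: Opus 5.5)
The plan is to use the span constraint to turn the problem into a one-dimensional concave maximization and then solve it by first-order conditions. Since \(\Delta v_i(t)\neq 0\), the subspace \(\mathrm{span}(\Delta v_i(t))\) is one-dimensional, and every feasible \(u_i\) can be written uniquely as \(u_i = g\,\Delta v_i(t)\) with \(g\in\mathbb{R}\). The map \(g\mapsto g\,\Delta v_i(t)\) is therefore a bijection between \(\mathbb{R}\) and the feasible set. It follows that maximizers of Eq.~\eqref{eq:app_control_problem} correspond one-to-one with maximizers over \(g\in\mathbb{R}\) of the reduced objective.

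Substituting \(u_i = g\,\Delta v_i(t)\) and writing \(a:=\|\Delta v_i(t)\|^2>0\), we get \(\langle u_i,\Delta v_i(t)\rangle = g\,a\) and \(\|u_i\|^2 = g^2 a\). The reduced objective is
\begin{equation*}
J(g) = a\Bigl(d_i(t)\,g - \tfrac{\rho}{2}g^2\Bigr).
\end{equation*}
Because \(a>0\) and \(\rho>0\), \(J\) is a strictly concave quadratic in \(g\) with \(J''(g) = -\rho a<0\). Hence \(J\) has a unique global maximizer, namely the unique stationary point. Setting \(J'(g)=a\,(d_i(t)-\rho g)=0\) gives \(g^*=d_i(t)/\rho\).

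One can also see this without calculus by completing the square:
\begin{equation*}
J(g)=\tfrac{a}{2\rho}\,d_i(t)^2-\tfrac{\rho a}{2}\bigl(g-d_i(t)/\rho\bigr)^2.
\end{equation*}
This form makes both optimality and uniqueness explicit. Mapping back through the bijection yields the unique maximizer \(u_i^*(t)=\tfrac{d_i(t)}{\rho}\Delta v_i(t)\).

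No step is a real obstacle. The only point needing care is uniqueness: it relies on \(\Delta v_i(t)\neq 0\), which makes the parametrization injective and gives \(a>0\) for strict concavity. If \(\Delta v_i(t)=0\), the feasible set is \(\{0\}\), so the problem is trivial and the formula still returns \(0\). The hypothesis \(d_i(t)\ge 0\) is not needed for the argument; it only ensures the gain is nonnegative.
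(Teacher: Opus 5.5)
Your proof is correct and takes essentially the same route as the paper. You parametrize the feasible set as $u_i = g\,\Delta v_i(t)$, factor out $\|\Delta v_i(t)\|^2>0$, and solve the resulting strictly concave quadratic through its unique stationary point $g^* = d_i(t)/\rho$; your explicit bijection remark (which gives uniqueness of $u_i^*$ itself) and the completing-the-square check are small additions the paper leaves implicit.
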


\begin{proof}
Since \(u_i \in \mathrm{span}(\Delta v_i(t))\), there exists a scalar \(g_i(t)\in\mathbb{R}\)
such that
\begin{equation}
u_i(t)=g_i(t)\,\Delta v_i(t).
\end{equation}
Substituting this into Eq.~\eqref{eq:app_control_problem}, we obtain
\begin{align}
J(g_i)
&=
d_i(t)\,\langle g_i(t)\Delta v_i(t), \Delta v_i(t)\rangle
-\frac{\rho}{2}\|g_i(t)\Delta v_i(t)\|^2 \\
&=
d_i(t)\,g_i(t)\|\Delta v_i(t)\|^2
-\frac{\rho}{2}g_i(t)^2\|\Delta v_i(t)\|^2 \\
&=
\|\Delta v_i(t)\|^2
\left(
d_i(t)\,g_i(t)-\frac{\rho}{2}g_i(t)^2
\right).
\end{align}
Since \(\|\Delta v_i(t)\|^2>0\), maximizing \(J(g_i)\) is equivalent to maximizing
\begin{equation}
\phi(g_i)=d_i(t)\,g_i-\frac{\rho}{2}g_i^2.
\end{equation}
This is a strictly concave quadratic function because
\begin{equation}
\phi''(g_i)=-\rho<0.
\end{equation}
Hence it has a unique global maximizer given by the stationary point:
\begin{equation}
\phi'(g_i)=d_i(t)-\rho g_i=0
\quad\Longrightarrow\quad
g_i^*(t)=\frac{d_i(t)}{\rho}.
\end{equation}
Therefore,
\begin{equation}
u_i^*(t)=g_i^*(t)\Delta v_i(t)=\frac{d_i(t)}{\rho}\Delta v_i(t).
\end{equation}
This proves Eq.~\eqref{eq:app_closed_form_u}.
\end{proof}

\begin{corollary}
\label{cor:degenerate_delta_v}
If \(\Delta v_i(t)=0\), then \(\mathrm{span}(\Delta v_i(t))=\{0\}\), and the unique feasible
point is \(u_i(t)=0\), which is therefore the unique optimizer.
\end{corollary}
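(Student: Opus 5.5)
The statement to prove is Corollary~\ref{cor:degenerate_delta_v}, the degenerate case excluded from Proposition~\ref{prop:closed_form_control}. The plan is to argue directly from the feasible set rather than through the scalar reparametrization used in the proof of Proposition~\ref{prop:closed_form_control}. That reparametrization breaks down when \(\Delta v_i(t)=0\): every scalar \(g_i\) gives the same \(u_i=g_i\Delta v_i(t)=0\), so \(g_i\) is not unique, while \(u_i\) still is.

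\textbf{Step 1: the feasible set.} By definition, \(\mathrm{span}(\Delta v_i(t))=\{\alpha\,\Delta v_i(t):\alpha\in\mathbb{R}\}\). With \(\Delta v_i(t)=0\), every such combination equals \(0\in\mathbb{R}^d\). Hence \(\mathrm{span}(\Delta v_i(t))=\{0\}\), and the constraint in Eq.~\eqref{eq:app_control_problem} forces \(u_i=0\).

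\textbf{Step 2: optimality and uniqueness.} The feasible set is the single point \(\{0\}\), so \(u_i=0\) is trivially a maximizer: no other feasible point exists to compare it against. For completeness, I will note the optimal value, \(d_i(t)\langle 0,0\rangle-\frac{\rho}{2}\|0\|^2=0\). Uniqueness also follows immediately, since any maximizer must be feasible and therefore equals \(0\).

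\textbf{Step 3: consistency remark.} I will observe that the formula \(u_i^*(t)=\frac{d_i(t)}{\rho}\Delta v_i(t)\) from Eq.~\eqref{eq:app_closed_form_u} also returns \(0\) in this case. So the closed form holds for all \(\Delta v_i(t)\), even though only \(u_i\) is unique and the gain \(g_i\) is not. No step here is a genuine obstacle. The only point requiring care is to phrase uniqueness in terms of \(u_i\) rather than \(g_i\).
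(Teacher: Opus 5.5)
Your argument is correct and matches the paper's own reasoning: the paper justifies the corollary inline, noting that $\mathrm{span}(\Delta v_i(t))=\{0\}$ makes the feasible set a singleton, so $u_i(t)=0$ is trivially the unique optimizer. Your extra remarks are accurate but not needed for the statement: the gain $g_i$ is non-unique while $u_i$ is unique, and the closed form $\frac{d_i(t)}{\rho}\Delta v_i(t)$ still returns $0$ in this case.
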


\subsection{vMF-based Derivation of the Directional Discrepancy}
\label{app:vmf_derivation}

For the theoretical derivation, we first define the strictly normalized token-wise
directions
\begin{equation}
\tilde v_{\mathrm{src},i}(t)
=
\frac{v_{\mathrm{src},i}(t)}{\|v_{\mathrm{src},i}(t)\|},
\qquad
\tilde v_{\mathrm{tar},i}(t)
=
\frac{v_{\mathrm{tar},i}(t)}{\|v_{\mathrm{tar},i}(t)\|},
\label{eq:app_strict_normalization}
\end{equation}
whenever \(v_{\mathrm{src},i}(t)\neq 0\) and \(v_{\mathrm{tar},i}(t)\neq 0\).
Then
\(\tilde v_{\mathrm{src},i}(t), \tilde v_{\mathrm{tar},i}(t)\in\mathbb{S}^{d-1}\), and we define
\begin{equation}
\cos\theta_i(t)
=
\tilde v_{\mathrm{src},i}(t)^\top \tilde v_{\mathrm{tar},i}(t).
\label{eq:app_cos_theta}
\end{equation}

We model \(\tilde v_{\mathrm{tar},i}(t)\) as directional data concentrated around mean
direction \(\tilde v_{\mathrm{src},i}(t)\) using the von Mises--Fisher (vMF) density:
\begin{equation}
p(x\mid \mu,\kappa)
=
C_d(\kappa)\exp\!\bigl(\kappa\,\mu^\top x\bigr),
\qquad
x\in\mathbb{S}^{d-1},\;
\mu\in\mathbb{S}^{d-1},\;
\kappa\ge 0.
\label{eq:app_vmf}
\end{equation}

\begin{proposition}
\label{prop:vmf_affine_equivalence}
Let \(x=\tilde v_{\mathrm{tar},i}(t)\) and \(\mu=\tilde v_{\mathrm{src},i}(t)\). Then the
negative log-likelihood of the vMF density is affine-equivalent to
\(1-\cos\theta_i(t)\), i.e.,
\begin{equation}
-\log p\!\bigl(\tilde v_{\mathrm{tar},i}(t)\mid \tilde v_{\mathrm{src},i}(t),\kappa\bigr)
=
C'(\kappa)
+
\kappa\bigl(1-\cos\theta_i(t)\bigr),
\label{eq:app_vmf_affine}
\end{equation}
where \(C'(\kappa)=-\log C_d(\kappa)-\kappa\) is independent of
\(\tilde v_{\mathrm{tar},i}(t)\).
\end{proposition}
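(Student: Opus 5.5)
The plan is a direct computation from the vMF density in Eq.~\eqref{eq:app_vmf}. We take the logarithm of the density evaluated at $x=\tilde v_{\mathrm{tar},i}(t)$ with mean direction $\mu=\tilde v_{\mathrm{src},i}(t)$, and then rewrite the inner product $\mu^\top x$ using Eq.~\eqref{eq:app_cos_theta}. Before doing so, we check that the density is well defined. By the strict normalization in Eq.~\eqref{eq:app_strict_normalization}, and assuming both velocities are nonzero, both $x$ and $\mu$ lie in $\mathbb{S}^{d-1}$. Moreover $C_d(\kappa)>0$ for every $\kappa\ge 0$, since it is the reciprocal of the integral of a positive function over the sphere. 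Hence $p>0$ and $\log p$ is finite.

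The key steps, in order, are as follows. First, since $\mu^\top x=\tilde v_{\mathrm{src},i}(t)^\top\tilde v_{\mathrm{tar},i}(t)=\cos\theta_i(t)$, we get
$-\log p(x\mid\mu,\kappa) = -\log C_d(\kappa) - \kappa\cos\theta_i(t)$.
Second, we add and subtract $\kappa$, which gives
$-\log C_d(\kappa) - \kappa + \kappa\bigl(1-\cos\theta_i(t)\bigr)$.
This is exactly $C'(\kappa)+\kappa\bigl(1-\cos\theta_i(t)\bigr)$ with $C'(\kappa)=-\log C_d(\kappa)-\kappa$. Third, we note that $C_d(\kappa)$ depends only on $d$ and $\kappa$, as the normalizer obtained by integrating over $x$. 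So $C'(\kappa)$ does not depend on $\tilde v_{\mathrm{tar},i}(t)$, and it also does not depend on $\mu$, by rotation invariance of the spherical measure. This establishes the affine equivalence, with positive slope $\kappa$ whenever $\kappa>0$. In the degenerate case $\kappa=0$ the statement holds trivially, with constant $-\log C_d(0)$.

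There is no real obstacle; the only step needing care is justifying that $C_d(\kappa)$ is independent of $\mu$. For this, we would write $C_d(\kappa)^{-1}=\int_{\mathbb{S}^{d-1}}\exp(\kappa\mu^\top x)\,dx$. We then apply the change of variables $x\mapsto Rx$ with an orthogonal matrix $R$ sending $\mu$ to a fixed unit vector, which shows that the integral is unchanged. As a concluding remark, we could add the identity $\|\tilde v_{\mathrm{src},i}-\tilde v_{\mathrm{tar},i}\|^2=2(1-\cos\theta_i)$, obtained by expanding the square of unit vectors. This links the demand $d_i(t)$ in Eq.~\eqref{eq:di_def} to the chordal distance.
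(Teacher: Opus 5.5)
Your proposal is correct and matches the paper's proof essentially line for line: take the log of the vMF density, substitute $\mu^\top x=\cos\theta_i(t)$, then add and subtract $\kappa$ to obtain $C'(\kappa)=-\log C_d(\kappa)-\kappa$. Your extra checks are correct additions that the paper leaves implicit: that $C_d(\kappa)>0$, and that $C_d(\kappa)$ does not depend on $\mu$ by rotation invariance.
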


\begin{proof}
Starting from Eq.~\eqref{eq:app_vmf},
\begin{equation}
\log p(x\mid \mu,\kappa)
=
\log C_d(\kappa)+\kappa\,\mu^\top x.
\end{equation}
Hence
\begin{equation}
-\log p(x\mid \mu,\kappa)
=
-\log C_d(\kappa)-\kappa\,\mu^\top x.
\end{equation}
Substituting \(x=\tilde v_{\mathrm{tar},i}(t)\) and \(\mu=\tilde v_{\mathrm{src},i}(t)\), and using
Eq.~\eqref{eq:app_cos_theta}, we obtain
\begin{equation}
-\log p\!\bigl(\tilde v_{\mathrm{tar},i}(t)\mid \tilde v_{\mathrm{src},i}(t),\kappa\bigr)
=
-\log C_d(\kappa)-\kappa \cos\theta_i(t).
\end{equation}
Now rewrite the last term as
\begin{equation}
-\kappa \cos\theta_i(t)
=
\kappa\bigl(1-\cos\theta_i(t)\bigr)-\kappa.
\end{equation}
Therefore,
\begin{equation}
-\log p\!\bigl(\tilde v_{\mathrm{tar},i}(t)\mid \tilde v_{\mathrm{src},i}(t),\kappa\bigr)
=
\bigl[-\log C_d(\kappa)-\kappa\bigr]
+
\kappa\bigl(1-\cos\theta_i(t)\bigr),
\end{equation}
which proves Eq.~\eqref{eq:app_vmf_affine}.
\end{proof}

This motivates the directional discrepancy
\begin{equation}
d_i(t)=1-\cos\theta_i(t).
\label{eq:app_di}
\end{equation}

\subsection{Geometric Properties of the Discrepancy}
\label{app:geometric_properties}

\begin{proposition}
\label{prop:bounded_discrepancy}
For all \(i,t\), the discrepancy in Eq.~\eqref{eq:app_di} satisfies
\begin{equation}
d_i(t)\in[0,2].
\end{equation}
\end{proposition}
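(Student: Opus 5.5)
The plan is to reduce the claim to the Cauchy--Schwarz inequality on the unit sphere. By Eq.~\eqref{eq:app_cos_theta}, $\cos\theta_i(t)=\tilde v_{\mathrm{src},i}(t)^\top \tilde v_{\mathrm{tar},i}(t)$, and both vectors lie on $\mathbb{S}^{d-1}$ by the strict normalization in Eq.~\eqref{eq:app_strict_normalization}. Cauchy--Schwarz then gives
\[
|\cos\theta_i(t)| \le \|\tilde v_{\mathrm{src},i}(t)\|\,\|\tilde v_{\mathrm{tar},i}(t)\| = 1,
\]
so $\cos\theta_i(t)\in[-1,1]$. Applying the map $c\mapsto 1-c$, which sends $[-1,1]$ onto $[0,2]$, to Eq.~\eqref{eq:app_di} yields $d_i(t)\in[0,2]$.

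As a cross-check and to connect with the geometric identity from Sec.~4.2, I would also expand the squared chordal distance:
\[
\|\tilde v_{\mathrm{src},i}-\tilde v_{\mathrm{tar},i}\|^2 = 2-2\cos\theta_i = 2d_i.
\]
Nonnegativity of this norm gives $d_i\ge 0$ directly. The triangle inequality gives $\|\tilde v_{\mathrm{src},i}-\tilde v_{\mathrm{tar},i}\|\le 2$, hence $2d_i\le 4$ and $d_i\le 2$. Either route suffices.

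The only delicate point is the case the proposition's quantifier ``for all $i,t$'' glosses over: a vanishing velocity, or the $\varepsilon$-regularized normalization $\hat v$ used in the implementation.
\begin{itemize}
\item With $\hat v=v/(\|v\|+\varepsilon)$, we have $\|\hat v\|<1$, so Cauchy--Schwarz gives $|\hat v_{\mathrm{src},i}^\top \hat v_{\mathrm{tar},i}|<1$, and the bound still holds (even strictly).
\item If a velocity is zero, then $\hat v=0$, so $\cos\theta_i=0$ and $d_i=1\in[0,2]$.
\end{itemize}
I would state the result under the strict normalization and add a remark covering these two regularized and degenerate cases.
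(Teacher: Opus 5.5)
Your proof is correct and matches the paper's argument: Cauchy--Schwarz on the unit vectors $\tilde v_{\mathrm{src},i}(t),\tilde v_{\mathrm{tar},i}(t)$ gives $\cos\theta_i(t)\in[-1,1]$, and subtracting from $1$ yields $d_i(t)\in[0,2]$. Your extra remarks are also correct and cover cases the paper treats only informally in its implementation note: for the $\varepsilon$-regularized $\hat v$, $\|\hat v\|<1$ makes the bound strict, and a vanishing velocity gives $d_i=1$.
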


\begin{proof}
Since \(\tilde v_{\mathrm{src},i}(t)\) and \(\tilde v_{\mathrm{tar},i}(t)\) are unit vectors,
the Cauchy--Schwarz inequality gives
\begin{equation}
-1
\le
\tilde v_{\mathrm{src},i}(t)^\top \tilde v_{\mathrm{tar},i}(t)
=
\cos\theta_i(t)
\le 1.
\end{equation}
Subtracting from \(1\) yields
\begin{equation}
0 \le 1-\cos\theta_i(t)\le 2.
\end{equation}
By Eq.~\eqref{eq:app_di}, this is exactly
\begin{equation}
0\le d_i(t)\le 2.
\end{equation}
\end{proof}

\begin{proposition}
\label{prop:geometric_identity}
Let
\(a=\tilde v_{\mathrm{src},i}(t)\in\mathbb{S}^{d-1}\) and
\(b=\tilde v_{\mathrm{tar},i}(t)\in\mathbb{S}^{d-1}\).
Then
\begin{equation}
\|a-b\|^2=2\,d_i(t).
\label{eq:app_geometric_identity}
\end{equation}
\end{proposition}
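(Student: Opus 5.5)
The plan is to expand the squared Euclidean norm using the inner product and then apply the unit-norm property of $a$ and $b$ together with the definition of $d_i(t)$ in Eq.~\eqref{eq:app_di}. First, bilinearity and symmetry of the inner product give
\begin{equation}
\|a-b\|^2=\langle a-b,\,a-b\rangle=\|a\|^2-2\,a^\top b+\|b\|^2 .
\end{equation}

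Next, since $a,b\in\mathbb{S}^{d-1}$ by the strict normalization in Eq.~\eqref{eq:app_strict_normalization}, we have $\|a\|^2=\|b\|^2=1$. By Eq.~\eqref{eq:app_cos_theta}, $a^\top b=\cos\theta_i(t)$. Substituting both facts yields $\|a-b\|^2=2-2\cos\theta_i(t)=2\bigl(1-\cos\theta_i(t)\bigr)$. Eq.~\eqref{eq:app_di} then identifies the right-hand side as $2\,d_i(t)$.

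There is no genuine obstacle. The only point needing care is that the identity uses the strictly normalized directions $\tilde v$, which are defined only when $v_{\mathrm{src},i}(t)$ and $v_{\mathrm{tar},i}(t)$ are both nonzero. With the $\varepsilon$-regularized $\hat v$ of the main text, the norms are slightly below one, so the identity holds only approximately. I would therefore state the proof under the standing nonzero assumption of Sec.~\ref{app:vmf_derivation}. As a consistency check, the identity combined with $0\le\|a-b\|^2\le(\|a\|+\|b\|)^2=4$ recovers Proposition~\ref{prop:bounded_discrepancy}.
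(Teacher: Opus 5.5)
Your proof is correct and is the same argument the paper gives: expand $\|a-b\|^2=\|a\|^2+\|b\|^2-2a^\top b$, then use $\|a\|=\|b\|=1$ and $a^\top b=\cos\theta_i(t)$ to obtain $2(1-\cos\theta_i(t))=2\,d_i(t)$. Your caveat that the identity holds only approximately for the $\varepsilon$-regularized $\hat v$ matches the paper's own implementation note.
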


\begin{proof}
Using the standard expansion of the squared Euclidean distance,
\begin{equation}
\|a-b\|^2
=
\|a\|^2+\|b\|^2-2a^\top b.
\end{equation}
Because \(a,b\in\mathbb{S}^{d-1}\), we have \(\|a\|=\|b\|=1\), and thus
\begin{equation}
\|a-b\|^2
=
1+1-2a^\top b
=
2(1-a^\top b).
\end{equation}
Since \(a^\top b=\cos\theta_i(t)\), we obtain
\begin{equation}
\|a-b\|^2
=
2\bigl(1-\cos\theta_i(t)\bigr)
=
2\,d_i(t).
\end{equation}
This proves Eq.~\eqref{eq:app_geometric_identity}.
\end{proof}

\begin{proposition}
\label{prop:monotonicity}
The discrepancy \(d_i(t)=1-\cos\theta_i(t)\) is monotonically increasing with respect
to \(\theta_i(t)\in[0,\pi]\).
\end{proposition}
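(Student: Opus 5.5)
We regard $d_i(t)$ as a function of the angle alone, $f(\theta)=1-\cos\theta$ on $[0,\pi]$. Here $\theta_i(t)=\arccos\bigl(\tilde v_{\mathrm{src},i}(t)^\top \tilde v_{\mathrm{tar},i}(t)\bigr)$ is well defined. Indeed, Proposition~\ref{prop:bounded_discrepancy} shows that the cosine in Eq.~\eqref{eq:app_cos_theta} lies in $[-1,1]$, and $\arccos$ maps this interval bijectively onto $[0,\pi]$. Monotonicity of $d_i(t)$ in $\theta_i(t)$ is therefore exactly monotonicity of $f$ on $[0,\pi]$. The claim is purely one-dimensional and does not depend on $i$, $t$, or the dimension $d$.

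The plan is a derivative argument. We compute
\begin{equation}
f'(\theta)=\sin\theta,
\end{equation}
and note that $\sin\theta\ge 0$ on $[0,\pi]$, with equality only at the endpoints $\theta=0$ and $\theta=\pi$. Since $f'>0$ on the open interval $(0,\pi)$ and $f$ is continuous on the closed interval, the mean value theorem gives $f(\theta_2)-f(\theta_1)=\sin(\xi)(\theta_2-\theta_1)>0$ for any $0\le\theta_1<\theta_2\le\pi$, with some $\xi\in(\theta_1,\theta_2)\subset(0,\pi)$. So $f$ is in fact strictly increasing, which is the strong form of the statement.

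As an alternative or cross-check that avoids calculus, one can use the fact that $\cos$ is strictly decreasing on $[0,\pi]$, so $1-\cos$ is strictly increasing. The derivative route simply justifies this standard fact. We would also record the consequences that match the paper's interpretation: the minimum $d_i=0$ is attained at $\theta=0$ (aligned directions), and the maximum $d_i=2$ at $\theta=\pi$ (opposite directions), consistent with Proposition~\ref{prop:bounded_discrepancy}.

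There is no real obstacle here. The only point needing care is that the endpoints, where $f'=0$, do not break strict monotonicity; the mean value theorem handles this because $\xi$ always lies in the open interval. It may also be worth remarking that the statement concerns the strictly normalized directions of Eq.~\eqref{eq:app_strict_normalization}, defined when both velocities are nonzero. For the $\varepsilon$-regularized normalization of the main text, the cosine is still bounded in $[-1,1]$ by Cauchy--Schwarz, so the same argument applies with $\theta$ defined via $\arccos$.
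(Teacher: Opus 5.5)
Your proof is correct and follows the paper's argument: both compute $f'(\theta)=\sin\theta\ge 0$ on $[0,\pi]$ and conclude that $d_i$ is monotonically increasing. Your mean-value-theorem step, which upgrades this to strict monotonicity because $\sin\xi>0$ for $\xi\in(0,\pi)$, is a valid sharpening that the paper does not state.
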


\begin{proof}
Consider the function
\begin{equation}
f(\theta)=1-\cos\theta,
\qquad \theta\in[0,\pi].
\end{equation}
Its derivative is
\begin{equation}
f'(\theta)=\sin\theta.
\end{equation}
Since \(\sin\theta\ge 0\) for all \(\theta\in[0,\pi]\), the function \(f\) is monotonically
increasing on \([0,\pi]\). Therefore, \(d_i(t)\) increases as the angular mismatch
\(\theta_i(t)\) increases.
\end{proof}

\subsection{Mean-gain Normalization}
\label{app:mean_gain}

Recall that the raw token-wise gain is defined as
\begin{equation}
g_i(t)=\frac{d_i(t)}{\rho},
\qquad
\bar g(t)=\frac{1}{N}\sum_{i=1}^N g_i(t),
\end{equation}
and the effective global scale is
\begin{equation}
\lambda_{\mathrm{eff}}(t)
=
\lambda\cdot \frac{\eta}{\bar g(t)+\varepsilon}.
\label{eq:app_lambda_eff}
\end{equation}
The final control is
\begin{equation}
u_i(t)=\lambda_{\mathrm{eff}}(t)\,g_i(t)\,\Delta v_i(t).
\label{eq:app_final_control}
\end{equation}

\begin{proposition}
\label{prop:mean_effective_gain}
The mean effective gain satisfies
\begin{equation}
\frac{1}{N}\sum_{i=1}^N \lambda_{\mathrm{eff}}(t)\,g_i(t)
=
\lambda\cdot \frac{\eta\,\bar g(t)}{\bar g(t)+\varepsilon}.
\label{eq:app_mean_effective_gain}
\end{equation}
\end{proposition}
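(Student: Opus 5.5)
The identity follows directly from the definitions in Eq.~\eqref{eq:app_lambda_eff} and the definition of $\bar g(t)$. The key observation is that $\lambda_{\mathrm{eff}}(t)$ depends only on $t$ and not on the token index $i$. It can therefore be pulled out of the average over tokens, after which the remaining sum is exactly $\bar g(t)$.

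Concretely, I will proceed in three steps.

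First, I note that $\lambda$, $\eta$, $\varepsilon$ and $\bar g(t)$ are all token-independent. By Proposition~\ref{prop:bounded_discrepancy}, $d_i(t)\ge 0$, so $g_i(t)\ge 0$ and hence $\bar g(t)\ge 0$. Together with $\varepsilon>0$, this gives $\bar g(t)+\varepsilon>0$, so $\lambda_{\mathrm{eff}}(t)$ is well defined.

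Second, I will factor the constant out of the sum by linearity:
\begin{equation*}
\frac{1}{N}\sum_{i=1}^N \lambda_{\mathrm{eff}}(t)\,g_i(t)=\lambda_{\mathrm{eff}}(t)\cdot\frac{1}{N}\sum_{i=1}^N g_i(t)=\lambda_{\mathrm{eff}}(t)\,\bar g(t).
\end{equation*}

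Third, I will substitute Eq.~\eqref{eq:app_lambda_eff} for $\lambda_{\mathrm{eff}}(t)$, which yields $\lambda\,\eta\,\bar g(t)/(\bar g(t)+\varepsilon)$. This is exactly Eq.~\eqref{eq:app_mean_effective_gain}.

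There is no real obstacle here. The only point needing care is well-definedness, namely positivity of the denominator, which the first step handles. I may also add a remark that, since $\bar g(t)\ge 0$, the factor $\bar g(t)/(\bar g(t)+\varepsilon)$ lies in $[0,1)$ and tends to $1$ when $\bar g(t)\gg\varepsilon$. This interprets the proposition as saying that the mean effective gain stays pinned near the target level $\lambda\eta$ regardless of how $\bar g(t)$ fluctuates over time.
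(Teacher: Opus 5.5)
Your proof is correct and matches the paper's argument exactly: you pull the token-independent factor $\lambda_{\mathrm{eff}}(t)$ out of the average to get $\lambda_{\mathrm{eff}}(t)\,\bar g(t)$, then substitute its definition. Your extra well-definedness check is a correct supplement that the paper leaves implicit; it uses $g_i(t)\ge 0$, which follows from $d_i(t)\ge 0$ and $\rho>0$, to conclude $\bar g(t)+\varepsilon>0$.
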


\begin{proof}
Because \(\lambda_{\mathrm{eff}}(t)\) does not depend on the token index \(i\),
\begin{equation}
\frac{1}{N}\sum_{i=1}^N \lambda_{\mathrm{eff}}(t)\,g_i(t)
=
\lambda_{\mathrm{eff}}(t)\cdot \frac{1}{N}\sum_{i=1}^N g_i(t)
=
\lambda_{\mathrm{eff}}(t)\,\bar g(t).
\end{equation}
Substituting Eq.~\eqref{eq:app_lambda_eff} gives
\begin{equation}
\lambda_{\mathrm{eff}}(t)\,\bar g(t)
=
\lambda\cdot \frac{\eta}{\bar g(t)+\varepsilon}\,\bar g(t)
=
\lambda\cdot \frac{\eta\,\bar g(t)}{\bar g(t)+\varepsilon},
\end{equation}
which proves Eq.~\eqref{eq:app_mean_effective_gain}.
\end{proof}

\begin{remark}
When \(\varepsilon=0\), Eq.~\eqref{eq:app_mean_effective_gain} reduces exactly to
\begin{equation}
\frac{1}{N}\sum_{i=1}^N \lambda_{\mathrm{eff}}(t)\,g_i(t)=\lambda \eta.
\end{equation}
For \(\varepsilon>0\), the mean effective gain becomes a regularized version of the
target level \(\lambda \eta\), which stabilizes the timestep-wise fluctuation induced by
variation in \(\bar g(t)\).
\end{remark}

\subsection{Implementation Note on Regularized Normalization}
\label{app:regularized_normalization}

In practice, for numerical stability, we use the regularized normalization
\begin{equation}
\hat v_{\mathrm{src},i}(t)
=
\frac{v_{\mathrm{src},i}(t)}{\|v_{\mathrm{src},i}(t)\|+\varepsilon},
\qquad
\hat v_{\mathrm{tar},i}(t)
=
\frac{v_{\mathrm{tar},i}(t)}{\|v_{\mathrm{tar},i}(t)\|+\varepsilon}.
\end{equation}
These vectors are not exactly unit-normalized in the strict mathematical sense.
Therefore, the vMF derivation and geometric identities above are stated for the ideal
unit-normalized directions \(\tilde v\), while the \(\hat v\)-based form should be viewed
as a numerically stable approximation used in implementation.

\section{Construction of the TanGOEdit Dataset}
\label{app:tangoedit_dataset}

To enable a controlled and fine-grained evaluation of localized text-driven 3D editing, we construct \textbf{TanGOEdit}, a curated benchmark consisting of 100 editing instances spanning five edit categories: \textbf{Add}, \textbf{Replace}, \textbf{Action Change}, \textbf{Style Change}, and \textbf{Remove}. Our dataset design is inspired by recent multi-stage 3D editing benchmark construction pipelines. In particular, we follow the overall philosophy of building a compact but diverse evaluation set with a roughly balanced category distribution, while adopting a VLM-assisted instruction construction strategy inspired by Nano3D. Rather than focusing on scale, our goal is to build a reliable benchmark that covers both rigid local edits and broader identity-preserving transformations.

Each sample in TanGOEdit consists of a \textit{source 3D asset}, a \textit{source rendering}, and a corresponding \textit{editing instruction}. The five categories are chosen to jointly cover representative localized structural edits, appearance-level modifications, and non-rigid semantic transformations. The final benchmark contains 100 curated examples, distributed across the five categories in a roughly balanced manner.
\paragraph{\textbf{Step 1: Source Asset Collection.}}
We first collect candidate 3D assets with clear object identity and visually interpretable geometry. For each asset, we inspect rendered views and retain examples whose main structure and editable regions are sufficiently visible for text-guided editing. We intentionally include diverse object types so that the benchmark does not overfit to a narrow semantic domain.
\paragraph{\textbf{Step 2: Editing Instruction Construction.}}
For each selected asset, we construct an editing instruction with the assistance of a vision-language model. Inspired by the instruction generation strategy used in Nano3D, we use Qwen-VL-2.5 to generate candidate instructions from rendered views and their visual context. While prior template-based pipelines mainly focus on \textit{add}, \textit{remove}, and \textit{replace} operations, we extend the instruction space to five categories by additionally introducing \textit{Action Change} and \textit{Style Change}. The generated instruction is required to be specific, semantically grounded, and identity-preserving, so that the requested edit modifies only the intended part, attribute, pose, or appearance without replacing the entire object.

For rigid edit types such as \textbf{Add}, \textbf{Replace}, and \textbf{Remove}, the instruction explicitly specifies the target entity and, whenever possible, the target region. For non-rigid edit types such as \textbf{Action Change} and \textbf{Style Change}, the instruction describes a plausible transformation that preserves the core identity of the source object. This design allows TanGOEdit to evaluate both precise local control and broader semantic transformation within a unified benchmark. Below, we provide the exact prompt template used for instruction construction.

\bluebox{Step 2: Editing Instruction Construction}{

\textbf{Task Definition.} Construction of a balanced 3D editing benchmark.

\textbf{Objective.} Given rendered views of a source 3D asset, assign the asset to one of five predefined editing categories and generate a single editing instruction that specifies a realistic and visually grounded modification.

\textbf{Editing Categories.} Add, Replace, Action Change, Style Change, Remove.

\textbf{Procedure and Requirements.}
\begin{itemize}
    \item  {Visual Analysis} : Inspect the rendered views and identify the main object identity, editable attributes, and visible regions.
    \item  {Category Assignment} : Assign the most semantically appropriate editing category among the five predefined categories.  
    \item  {Instruction Generation} : Generate one concise editing instruction that clearly describes the desired modification.
    \item  {Specificity} : The instruction must be explicit and unambiguous, and should clearly identify the target object, part, attribute, or transformation.
    \item {Identity Preservation} : The edit must preserve the core identity of the source object and should not amount to a complete replacement of the entire asset.
    \item {Localizability} : The requested change should be visually grounded in a region or attribute that can be reasonably evaluated from rendered views.
    \item {Diversity} : The final dataset should maintain diversity in semantics and edit patterns while preserving an approximately balanced distribution across the five categories.
    \item {Category-specific Constraints} : For Action Change, the edit should describe a plausible pose or action change while preserving object identity. For Style Change, the instruction should modify appearance, color, material, or visual style without changing the main object semantics.
\end{itemize}

\textbf{Output Format} : For each source asset, return (1) the assigned editing category and (2) the final editing instruction.

}

\paragraph{\textbf{Step 3: Quality Control and Final Sample Selection.}}
After candidate instructions are generated, we perform a verification and refinement stage to improve dataset quality. Ambiguous, visually unverifiable, semantically inconsistent, or overly destructive instructions are removed or rewritten. We further discard cases in which the edited content is not clearly supported by the visible source rendering or where the intended modification is difficult to evaluate reliably. Finally, we select 100 high-quality samples with a roughly balanced distribution across the five edit categories, resulting in a compact but challenging benchmark for localized 3D editing.

As a result, TanGOEdit contains a compact yet diverse set of editing scenarios that jointly test multiple aspects of 3D editing performance. Local structural edits such as Add, Replace, and Remove primarily evaluate whether a method can confine changes to the intended region without introducing collateral deformation, while Action Change and Style Change further test whether the method can handle broader semantic transformations without losing global coherence. This combination makes TanGOEdit suitable for evaluating both localized controllability and robustness under more challenging edits.

Overall, TanGOEdit is designed to evaluate whether a method can simultaneously achieve \textbf{(1) precise local modification}, \textbf{(2) faithful preservation of unedited regions}, and \textbf{(3) global structural coherence} under both rigid and non-rigid editing scenarios.

\section{More Qualitative Results}
We provide additional qualitative comparisons in Figs.~\ref{fig:fig11}, \ref{fig:fig12} and \ref{fig:fig13} to further examine the behavior of TanGO under diverse text-driven 3D editing scenarios. The examples cover both rigid local edits, such as object addition, replacement, and removal, and broader non-rigid transformations, including pose and style changes. Across these cases, TanGO consistently produces edits that are well localized to the intended region while preserving the geometry and appearance of unedited parts.

Compared with prior methods, our approach more reliably avoids common failure modes such as under-editing, unintended deformation of neighboring parts, and leakage of the editing effect into regions that should remain unchanged. This behavior is particularly evident in the highlighted regions, where baseline methods often either fail to modify the target part sufficiently or introduce collateral distortions around the edited area. In contrast, TanGO better maintains part boundaries and global structural coherence, even when the required transformation is relatively large.

These additional results support our main claim that token-wise tangent-space control provides a more precise mechanism for localized 3D editing. By selectively strengthening edits in target regions while suppressing unnecessary changes elsewhere, TanGO achieves a favorable balance between editability and preservation across a broad range of editing types.

\begin{figure*}[p]
  \begin{center}
    \centerline{\includegraphics[width=\textwidth]{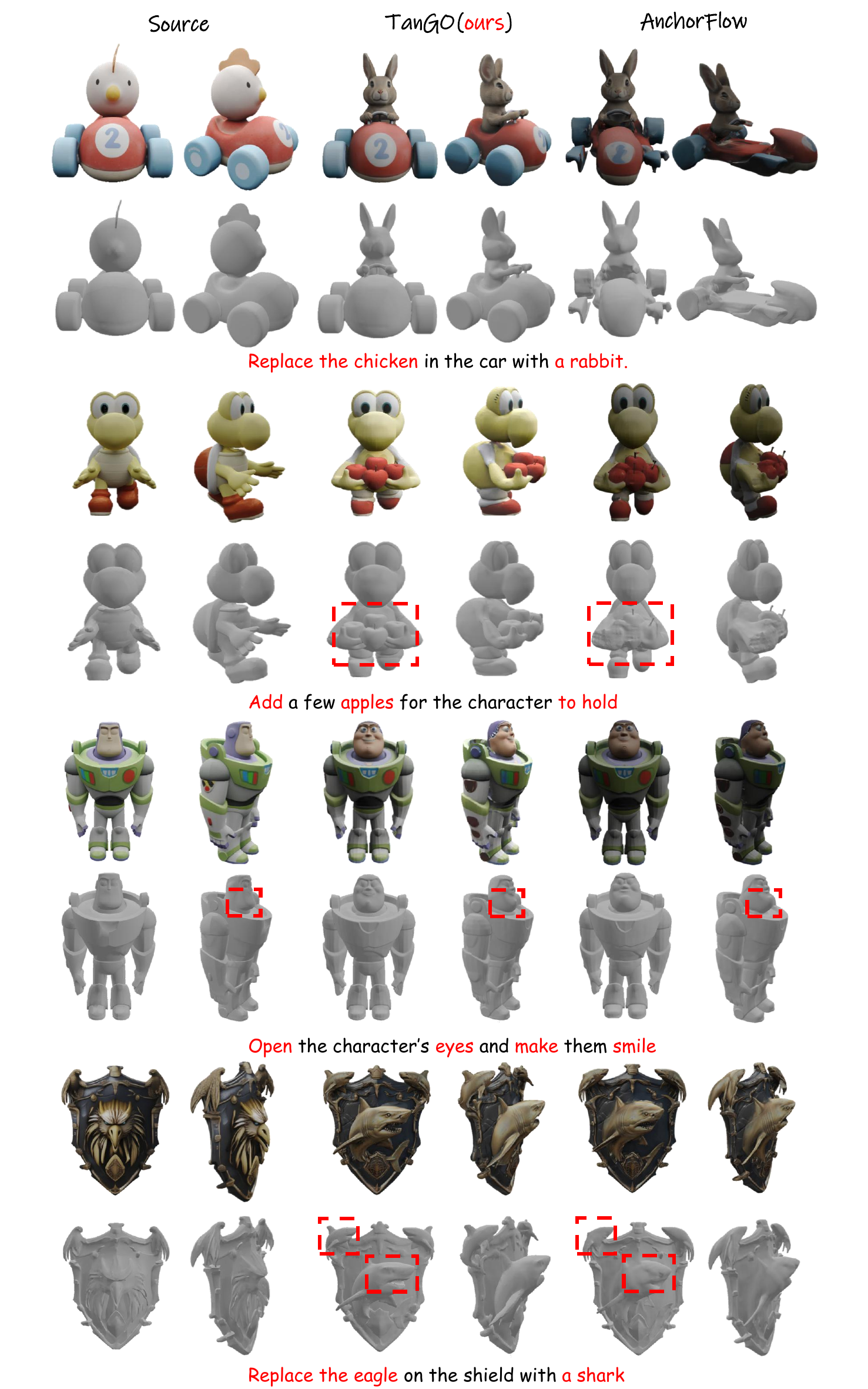}}
    \caption{Additional qualitative results. TanGO achieves more localized edits.}
    \label{fig:fig11}
  \end{center}
\end{figure*}

\begin{figure*}[p]
  \begin{center}
    \centerline{\includegraphics[width=\textwidth]{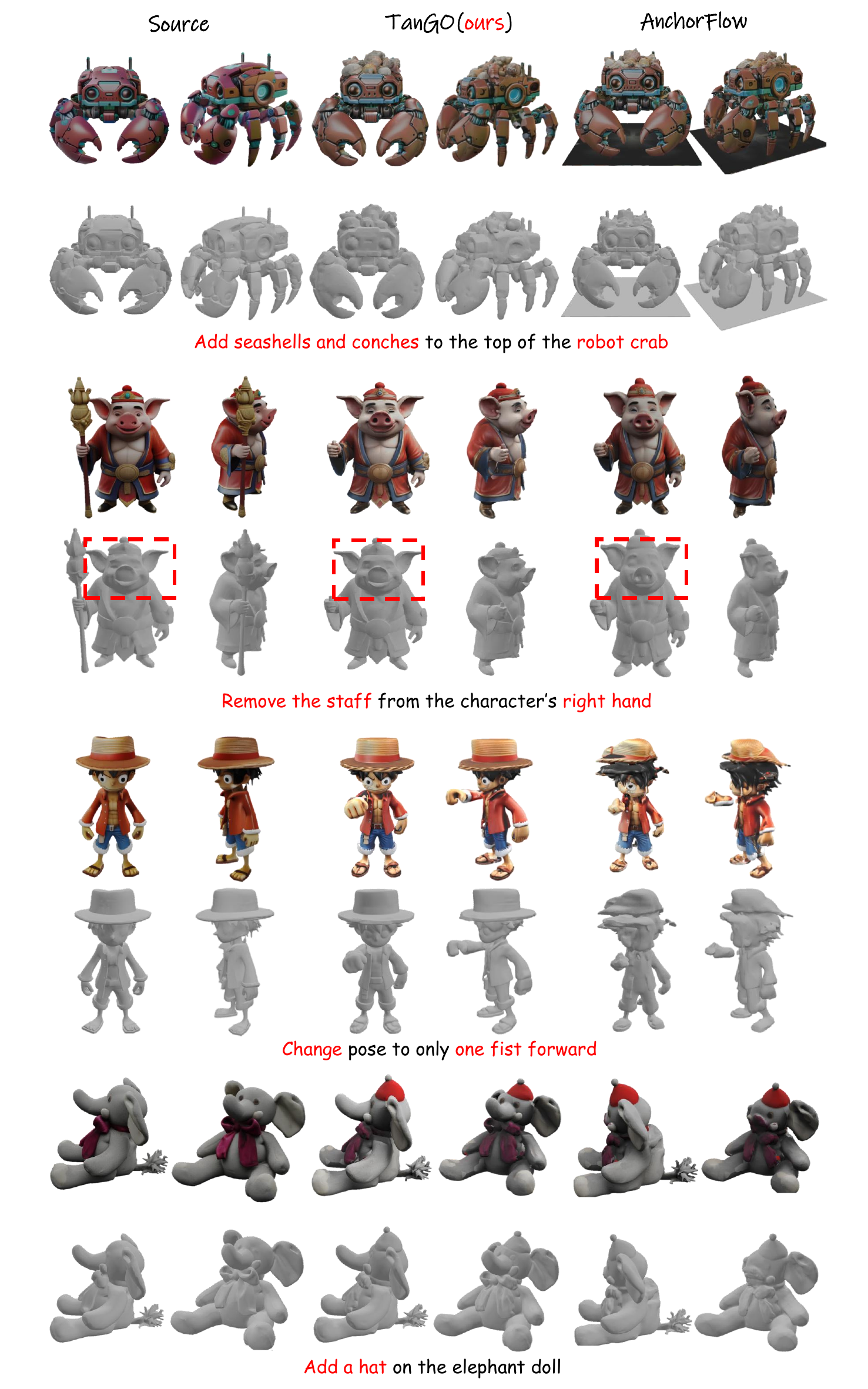}}
    \caption{Additional qualitative results (continued)}
    \label{fig:fig12}
  \end{center}
\end{figure*}

\begin{figure*}[p]
  \begin{center}
    \centerline{\includegraphics[width=\textwidth, height=20cm]{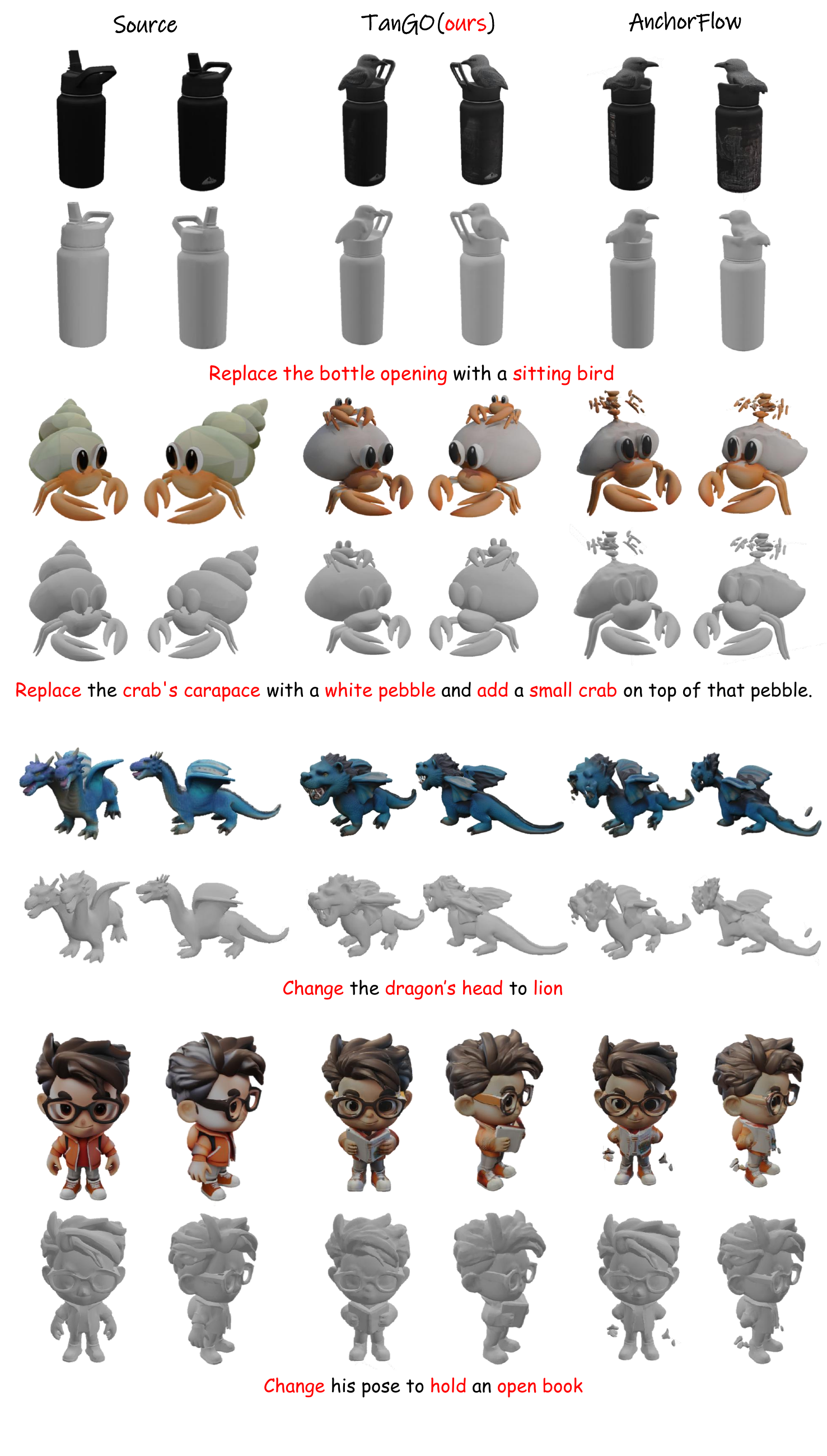}}
    \caption{Additional qualitative results (continued)}
    \label{fig:fig13}
  \end{center}
\end{figure*}

\clearpage

\section{More Quantitative Results}
\label{app:more_qualitative}

\subsection{Mask-aware 3D Evaluation}
To complement the main TanGOEdit evaluation, we conduct an additional mask-aware 3D evaluation on a separate 150-sample benchmark. This benchmark consists of 100 samples from Edit3DBench and 50 TanGOEdit samples annotated in Blender. We use this separate protocol because mask-aware geometric preservation metrics require ground-truth 3D edit masks, which are not available for the full TanGOEdit benchmark. In addition to the rendered-view metrics DINO-I, CLIP-I, and CLIP-T, we report Uni3D$_{\mathrm{txt}}$ for 3D-native text-shape alignment, and CD$_{\mathrm{preserve}}$ and NC$_{\mathrm{preserve}}$ for mask-aware geometric preservation.

\begin{table}[htb!]
\centering
\scriptsize
\caption{Additional baseline comparisons with 3D metrics.}
\label{tab:geometry_metrics}
\resizebox{\linewidth}{!}{
\begin{tabular}{l c c c c c c}
\toprule
Method & DINO-I $\uparrow$ & CLIP-I $\uparrow$ & CLIP-T $\uparrow$ & Uni3D$_{\mathrm{txt}}$ $\uparrow$ & CD$_{\mathrm{preserve}}$ $\downarrow$ & NC$_{\mathrm{preserve}}$ $\uparrow$ \\
\midrule
MVEdit       & 0.5931 & 0.4986 & 0.1311 & 0.0858 & 0.0381 & 0.5449 \\
EditP23      & 0.5948 & 0.5002 & 0.1309 & 0.0871 & 0.0311 & 0.5804 \\
FlowEdit     & 0.6118 & 0.7213 & 0.2085 & 0.2239 & 0.0398 & 0.5775 \\
AnchorFlow   & 0.6459 & 0.7478 & 0.2241 & 0.2432 & 0.0382 & 0.5669 \\
VoxHammer    & 0.6482 & 0.7641 & 0.2273 & 0.2459 & 0.0355 & 0.5685 \\
Nano3D       & 0.6172 & 0.7583 & 0.2194 & 0.2375 & 0.0381 & 0.5638 \\
\midrule
\rowcolor{green!15}
\textbf{TanGO (Ours)} & \textbf{0.6623} & \textbf{0.7718} & \textbf{0.2388} & \textbf{0.2593} & \textbf{0.0295} & \textbf{0.5835} \\
\bottomrule
\end{tabular}
}
\end{table}

As shown in Table~\ref{tab:geometry_metrics}, TanGO outperforms all baselines, including VoxHammer and Nano3D, across both rendered-view metrics and 3D-aware metrics. The improvement in Uni3D$_{\mathrm{txt}}$ indicates better 3D-native text-shape alignment, while the lower CD${\mathrm{preserve}}$ and higher NC$_{\mathrm{preserve}}$ show stronger preservation of unedited geometry. These results provide additional evidence that TanGO improves localized editing quality not only in rendered-view evaluation but also under mask-aware geometric assessment.

\subsection{Ablation on Per-token Gain Design}

To verify that the proposed directional demand is not merely an arbitrary token-wise rescaling, we compare TanGO with simpler gain designs. All variants use the same update form $u_i(t)=\lambda_{\mathrm{eff}}(t)g_i(t)\Delta v_i(t)$, but differ in how the per-token gain $g_i(t)$ is defined. We compare against global scaling, L2-magnitude-based gain, and inner-product-based gain. Global scaling ignores token-wise editing demand, while L2 magnitude and inner-product gains can be affected by raw velocity scale or unnormalized similarity. In contrast, TanGO uses a bounded and scale-invariant directional discrepancy.

\begin{table}[t]
\centering
\scriptsize
\caption{Ablation against simple per-token gain alternatives.}

\label{tab:per_token_ablation}
\resizebox{\linewidth}{!}{
\begin{tabular}{l c c c c c c}
\toprule
Method ($g_i$) 
& DINO-I $\uparrow$ 
& CLIP-I $\uparrow$ 
& CLIP-T $\uparrow$
& Uni3D$_{\mathrm{txt}}$ $\uparrow$ 
& CD$_{\mathrm{preserve}}$ $\downarrow$ 
& NC$_{\mathrm{preserve}}$ $\uparrow$ \\
\midrule
Global Scaling \quad ($g_i = \mathrm{const.}$)                                         & 0.6145 & 0.7230 & 0.2215 & 0.2304 & 0.0412 & 0.5120 \\
L2 Magnitude \quad ($g_i = \|v_{\mathrm{tar},i} - v_{\mathrm{src},i}\|$)  & 0.5821 & 0.6894 & 0.2044 & 0.2118 & 0.0567 & 0.4855 \\
Inner-Product \quad ($g_i = \langle v_{\mathrm{src},i}, v_{\mathrm{tar},i} \rangle$)  & 0.5903 & 0.7012 & 0.2109 & 0.2187 & 0.0531 & 0.4932 \\
\midrule
\rowcolor{green!15}
\textbf{TanGO (Ours)} & \textbf{0.6623} & \textbf{0.7718} & \textbf{0.2388} & \textbf{0.2593} & \textbf{0.0295} & \textbf{0.5835} \\
\bottomrule
\end{tabular}
}
\end{table}

As shown in Table~\ref{tab:per_token_ablation}, TanGO consistently achieves the best performance across all metrics. The global scaling baseline performs worse because it applies the same gain to all tokens regardless of local editing demand. The L2-magnitude and inner-product variants also underperform, suggesting that raw velocity magnitude or unnormalized similarity is less reliable for identifying tokens that require stronger editing. These results support the use of TanGO's direction-only demand for both semantic alignment and geometric preservation.

\subsection{Cross-model Quantitative Generalization}
To further examine whether TanGO generalizes beyond Hunyuan3D 2.1, we apply the same token-wise steering mechanism to TripoSG, another VecSet-based 3D foundation model. We compare TanGO with AnchorFlow under the same rendered-view and mask-aware 3D evaluation metrics.

\begin{table}[h] 
\centering
\scriptsize
\caption{Cross-model generalization results.}
\label{tab:cross_model_generalization}
\resizebox{\linewidth}{!}{
\begin{tabular}{l c c c c c c}
\toprule
& DINO-I $\uparrow$ 
& CLIP-I $\uparrow$ 
& CLIP-T $\uparrow$
& Uni3D$_{\mathrm{txt}}$ $\uparrow$ 
& CD$_{\mathrm{preserve}}$ $\downarrow$ 
& NC$_{\mathrm{preserve}}$ $\uparrow$ \\
\midrule
AnchorFlow + Hunyuan3D 2.1 & 0.6459 & 0.7478 & 0.2241 & 0.2432 & 0.0382 & 0.5669 \\
\rowcolor{green!15}
\textbf{TanGO + Hunyuan3D 2.1}      & \textbf{0.6623} & \textbf{0.7718} & \textbf{0.2388} & \textbf{0.2593} & \textbf{0.0295} & \textbf{0.5835} \\
AnchorFlow + TripoSG       &0.6402 &0.7374 &0.2219 &0.2397 &0.0399 &0.5542 \\
TanGO + TripoSG            &0.6523 &0.7511 &0.2315 &0.2453 &0.0343 &0.5699 \\
\bottomrule
\end{tabular}
}
\end{table}

As shown in Table~\ref{tab:cross_model_generalization}, TanGO consistently improves over AnchorFlow on both Hunyuan3D 2.1 and TripoSG. These results indicate that the proposed token-wise tangent-space steering is not restricted to a single VecSet-based backbone, and can generalize across different 3D foundation models.

\begin{figure*}[htb!]
  \begin{center}
    \centerline{\includegraphics[width=\textwidth]{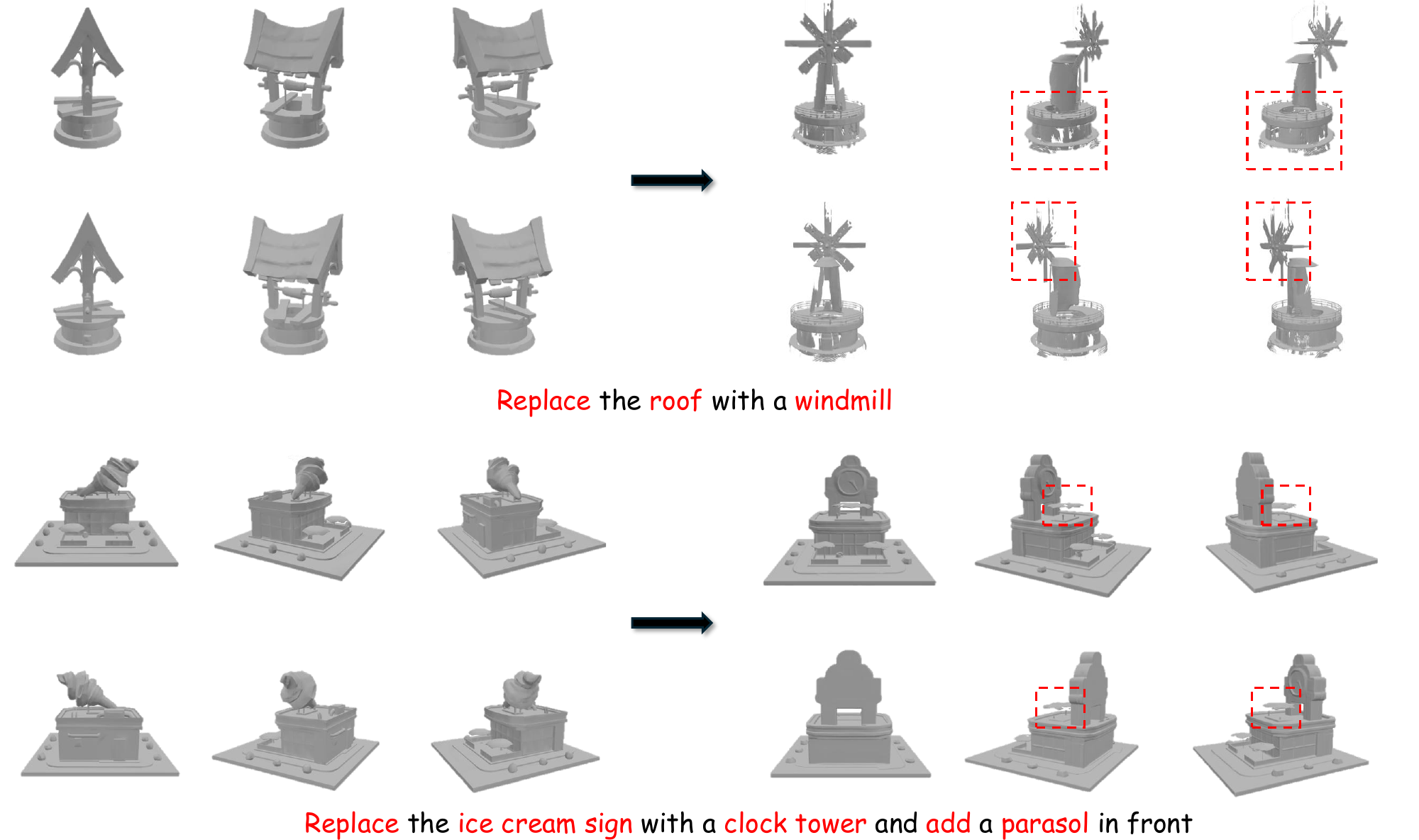}}
    \caption{\textbf{Failure cases:} The mesh breaks or distorts on thin structures like windmill blades or parasols.}
    \label{fig:fig14}
  \end{center}
\end{figure*}

\section{Limitations and Future Work}
While TanGO demonstrates robust capabilities in localized, training-free 3D editing, it inherits certain limitations from its underlying 3D generative framework. First, because our method operates entirely within the latent space of a pre-trained 3D foundation model, the overall quality and the preservation of original details in unedited regions are fundamentally bottlenecked by the reconstruction performance of the model's 3D Variational Autoencoder (VAE). Even when our tangent-space guidance successfully isolates and protects unedited tokens from modification, any high-frequency geometric or texture details that the VAE fails to accurately reconstruct during the initial encoding and final decoding stages will inevitably be lost. Consequently, the upper bound of unedited detail preservation is strictly dictated by the fidelity of the VAE, rather than the precision of our editing algorithm. Second, TanGO exhibits limited editing proficiency when dealing with extremely thin or highly intricate geometric structures as shown in Fig.~\ref{fig:fig14}. In current VecSet-based token representations, highly delicate features such as thin wires, fine meshes, or sharply protruding parts are often compressed into a remarkably small number of tokens or smoothed out by surrounding spatial contexts. As a result, applying precise tangent-space guidance to these structurally sparse regions becomes challenging. This limitation can sometimes lead to topological artifacts, structural collapse, or instances where the model ignores the text instruction for those specific delicate areas.

Future work could address these issues by integrating higher-resolution 3D tokenizers or adopting hierarchical latent representations that better capture and manipulate fine-grained topology without losing the advantages of training-free guidance.

\section{User Study Details}
To further validate the perceptual quality and editing precision of our method, we conducted a user study comparing TanGO with the strongest baseline, AnchorFlow. This study aims to provide a qualitative assessment of the generated 3D assets from a human perspective, complementing the quantitative metrics reported in Table 2 of the main paper.

\subsection{Participants and Setup}
We recruited 22 participants to evaluate the editing results. To ensure a comprehensive evaluation, we randomly sampled 20 text-driven 3D editing pairs from our TanGOEdit benchmark. These samples were carefully selected to encompass all five editing categories: Add, Replace, Action Change, Style Change, and Remove. 

\subsection{Evaluation Criteria}
Participants were asked to evaluate the generated 3D models based on three distinct criteria, which directly correspond to the dimensions reported in Table 2:
\begin{itemize}
    \item \textbf{Prompt Alignment:} Which model better reflects the target editing instruction while accurately applying the intended semantic changes?
    \item \textbf{Visual Quality:} Which model exhibits fewer visual artifacts (e.g., surface collapse, mesh tearing) and higher overall rendering aesthetics?
    \item \textbf{Shape Preservation:} Which model better preserves the unedited regions and the core identity of the original source 3D asset without introducing collateral deformations?
\end{itemize}

\subsection{Procedure}
The study was conducted using a Two-Alternative Forced Choice (2AFC) format\cite{macmillan2004detection}. For each test case, participants were presented with the source 3D rendering, the target text prompt, and the edited 3D models from both TanGO and AnchorFlow. To ensure a fair comparison, all 3D models were rendered from identical, representative camera viewpoints. The display order of the methods (Left/Right) was strictly randomized for each question to prevent any spatial selection bias. Participants were instructed to select the model that best satisfied each of the three criteria independently. The final preference percentages reported in the main paper were calculated by aggregating the votes across all 22 users and test cases.

\end{document}